\newtheorem{theorem}{Theorem}
\newtheorem{corollary}[theorem]{Corollary}
\theoremstyle{definition}
\newtheorem{definition}[theorem]{Definition}
\newtheorem{observation}[theorem]{Observation}
\DeclareMathOperator*{\argmin}{arg\,min}
\title{Fairness, Semi-Supervised Learning, and More:\\ A General Framework for Clustering with Stochastic Pairwise Constraints}
\author {
    % Authors
    Brian Brubach \textsuperscript{\rm 1},
    Darshan Chakrabarti \textsuperscript{\rm 2},
    John P.\ Dickerson \textsuperscript{\rm 3}, \\
    Aravind Srinivasan \textsuperscript{\rm 3},
    Leonidas Tsepenekas \textsuperscript{\rm 3}\\
}
\begin{document}

\maketitle

\begin{abstract}
  Metric clustering is fundamental in areas ranging from Combinatorial Optimization and Data Mining, to Machine Learning and Operations Research. However, in a variety of situations we may have additional requirements or knowledge, distinct from the underlying metric, regarding which pairs of points should be clustered together. To capture and analyze such scenarios, we introduce a novel family of \emph{stochastic pairwise constraints}, which we incorporate into several essential clustering objectives (radius/median/means). Moreover, we demonstrate that these constraints can succinctly model an intriguing collection of applications, including among others \emph{Individual Fairness} in clustering and \emph{Must-link} constraints in semi-supervised learning. Our main result consists of a general framework that yields approximation algorithms with provable guarantees for important clustering objectives, while at the same time producing solutions that respect the stochastic pairwise constraints. Furthermore, for certain objectives we devise improved results in the case of Must-link constraints, which are also the best possible from a theoretical perspective. Finally, we present experimental evidence that validates the effectiveness of our algorithms.

\end{abstract}
%\textbf{Content Areas) }Primary: ML Clustering, Secondary: ML Ethics, ML Optimization, ML Semi-Supervised Learning

\section{Introduction}

In a generic metric clustering problem, there is a set of points $\mathcal{C}$, requiring service from a set of locations $\mathcal{F}$, where both $\mathcal{C}$ and $\mathcal{F}$ are embedded in some metric space. The sets $\mathcal{C}, \mathcal{F}$ do not need to be disjoint, and we may very well have $\mathcal{C} = \mathcal{F}$. The goal is then to choose a set of locations $S \subseteq \mathcal{F}$, where $S$ might have to satisfy additional problem-specific requirements and an assignment $\phi: \mathcal{C} \mapsto S$, such that a metric-related objective function over $\mathcal{C}$ is minimized. 

However, in a variety of situations there may be external and metric-independent constraints imposed on $\phi$, regarding which pairs of points $j,j' \in \mathcal{C}$ should be clustered together, i.e., constraints forcing a linkage $\phi(j) = \phi(j')$. In this work, we generalize this deterministic requirement, by introducing a novel family of \emph{stochastic pairwise constraints}. Our input is augmented with multiple sets $P_q$ of pairs of points from $\mathcal{C}$ ($P_q \subseteq \binom{\mathcal{C}}{2}$ for each $q$), and values $\psi_q \in [0,1]$. Given these, we ask for a randomized solution, which ensures that in expectation at most $\psi_q |P_q|$ pairs of $P_q$ are separated in the returned assignment. 
% Note that when $\psi_q = 0 ~\forall q$, this captures deterministic Must-link constraints (e.g., expert labels linking points). Further, when each set $P_q$ contains only one pair ($|P_q| = 1 ~\forall q$), then $\psi_q$ gives an upper bound on the probability that the pair of points must be separated. 
%Leo: I think that the above is kind of redundant. Saying that these are our constraints and claiming that they can model the following applications is more than enough for this part of the paper. In later sections, we go in much more depth regarding why this is true.
In Sections \ref{definitions}-\ref{Motivations}, we discuss how these constraints have enough expressive power to capture a wide range of applications such as extending the notion of \emph{Individual Fairness} from classification to clustering, and incorporating elements of Semi-Supervised clustering.
%, with the most notable being i) a notion of \emph{Individual Fairness} in clustering, and ii) elements of Semi-Supervised clustering. 

Another constraint we address is when $\mathcal{C} = \mathcal{F}$ and every chosen point $j \in S$ must serve as an exemplar of the cluster it defines (the set of all points assigned to it). The subtle difference here, is that an exemplar point should be assigned to its own cluster, i.e., $\phi(j) = j$ for all $j \in S$. This constraint is highly relevant in strict classification settings, and is trivially satisfied in vanilla clustering variants where each point is always assigned to its nearest point in $S$. However, the presence of additional requirements on $\phi$ makes its satisfaction more challenging. Previous literature, especially in the context of fairness in clustering \cite{anderson2020,esmaeili2020,bera2020,bercea2019}, does not address this issue, but in our framework we explicitly offer the choice of whether or not to enforce it.

% TODO: Add no-k pitch as well

\subsection{Formal Problem Definitions}\label{definitions}

We are given a set of points $\mathcal{C}$ and a set of locations $\mathcal{F}$, in a metric space characterized by the distance function $d: \mathcal{C} \cup \mathcal{F} \times \mathcal{C} \cup \mathcal{F} \mapsto \mathbb{R}_{\geq 0}$, which satisfies the triangle inequality. 
% TODO: Should we note the other metric properties here since we need them?
Moreover, the input includes a concise description of a set $\mathcal{L} \subseteq 2^{\mathcal{F}}$, that captures the allowable configurations of location openings. The goal of all problems we consider, is to find a set $S \subseteq \mathcal{F}$, with $S \in \mathcal{L}$, and an efficiently-sampleable distribution $\mathcal{D}$ over assignments $\mathcal{C} \mapsto S$, such that for a randomly drawn $\phi \sim \mathcal{D}$ we have: (i) an objective function being minimized, and (ii) depending on the variant at hand, further constraints are satisfied by $\phi$. We study two types of additional constraints imposed on $\phi$.
\begin{itemize}
    \item \emph{Stochastic Pairwise Constraints} (SPC): We are given a family of sets $\mathcal{P} = \{P_1, P_2, \ldots\}$, where each $P_q \subseteq \binom{\mathcal{C}}{2}$ is a set of pairs of points from $\mathcal{C}$, and a sequence $\psi = (\psi_1, \psi_2, \ldots)$ with $\psi_q \in [0, 1]$. We then want $\sum_{\{j,j'\} \in P_q}\Pr_{\phi \sim \mathcal{D}}[\phi(j) \neq \phi(j')] \leq \psi_q |P_q|, ~\forall P_q \in \mathcal{P}$.
    \item \emph{Centroid Constraint} (CC): When this is imposed on any of our problems, we must first have $\mathcal{C} = \mathcal{F}$. In addition, we should ensure that $\Pr_{\phi \sim \mathcal{D}}[\phi(i) = i]=1$ for all $i \in S$.
\end{itemize}
\textbf{Special Cases of SPC:} When each $P_q \in \mathcal{P}$ has $|P_q| = 1$, we get two interesting resulting variants.
\begin{itemize}
    \item $\psi_q = 0, \forall q$: For each $P_q = \big{\{}\{j,j'\}\big{\}}$ we must ensure that $j,j'$ have $\Pr_{\phi \sim \mathcal{D}}[\phi(j) = \phi(j')] = 1$, and hence we call such constraints \emph{must-link} (ML). Further, since there is no actual randomness involved in these constraints, we assume w.l.o.g. that $|\mathcal{D}| = 1$, and only solve for a single $\phi: \mathcal{C} \mapsto S$ instead of a distribution over assignments.
    \item $\psi_q \geq 0 , \forall q$: For each $P_q = \big{\{}\{j,j'\}\big{\}}$ we must have $\Pr_{\phi \sim \mathcal{D}}[\phi(j) \neq \phi(j')] \leq \psi_q$, and therefore we call this constraint \emph{probabilistic-bounded-separation} (PBS).
\end{itemize}
The objective functions we consider are:
\begin{itemize}
    \item \textbf{$\mathcal{L}$-center/$\mathcal{L}$-supplier:} Here we aim for the minimum $\tau$ (``radius''), such that $\Pr_{\phi \sim \mathcal{D}}[d(\phi(j),j) \leq \tau] = 1$ for all $j \in \mathcal{C}$. Further, in the $\mathcal{L}$-center setting, we have $\mathcal{C} = \mathcal{F}$.
    \item \textbf{$\mathcal{L}$-median ($p=1$)/$\mathcal{L}$-means ($p=2$):} Here the goal is to minimize $(\sum_{j \in \mathcal{C}}\mathbb{E}_{\phi \sim \mathcal{D}}[d(\phi(j),j)^p])^{1/p}$.
\end{itemize}

There are four types of location specific constraints that we study in this paper. In the first, which we call \emph{unrestricted}, $\mathcal{L} = 2^\mathcal{F}$ and hence any set of locations can serve our needs. In the second, we have $\mathcal{L} = \{S \subseteq \mathcal{F} ~|~ |S| \leq k\}$ for some given positive integer $k$. This variant gives rise to the popular \emph{$k$-center/$k$-supplier/$k$-median/$k$-means} objectives. In the third, we assume that each $i \in \mathcal{F}$ has an associated cost $w_i \geq 0$, and for some given $W \geq 0$ we have $\mathcal{L} = \{S \subseteq \mathcal{F}~|~ \sum_{i \in S}w_i \leq W\}$. In this case the resulting objectives are called \emph{knapsack-center/knapsack-supplier/knapsack-median/knapsack-means}. Finally, if the input also consists of a matroid $\mathcal{M}=(\mathcal{F}, \mathcal{I})$, where $\mathcal{I} \subseteq 2^\mathcal{F}$ the family of independent sets of $\mathcal{M}$, we have $\mathcal{L} = \mathcal{I}$, and the objectives are called \emph{matroid-center/matroid-supplier/matroid-median/matroid-means}.

To specify the problem at hand, we use the notation \textbf{Objective}-\textbf{List of Constraints}. For instance, \textbf{$\mathcal{L}$-means-SPC-CC} is the $\mathcal{L}$-means problem, where we additionally impose the SPC and CC constraint. We could also further specify $\mathcal{L}$, by writing for example \textbf{$k$-means-SPC-CC}. Moreover, observe that when no constraints on $\phi$ are imposed, we get the vanilla version of each objective, where the lack of any stochastic requirement implies that the distribution $\mathcal{D}$ once more has support of size $1$, i.e., $|\mathcal{D}|=1$, and we simply solve for just an assignment $\phi: \mathcal{C} \mapsto S$.

\subsection{Motivation}\label{Motivations}

In this section we present a wide variety of applications, that can be effectively modeled by our newly introduced SPCs.

\textbf{Fairness: }With machine-learning clustering approaches being ubiquitous in everyday decision making, a natural question that arises and has recently captured the interest of the research community, is how to avoid clusterings which perpetuate existing social biases. 

The \emph{individual} approach to fair classification introduced in the seminal work of~\cite{Dwork2012} assumes that we have access to an additional metric, separate from the feature space, which captures the true ``similarity'' between points (or some approximation of it). This similarity metric may be quite different from the feature space $d$ (e.g., due to redundant encodings of features such as race), and its ultimate purpose is to help ``treat similar candidates similarly''. Note now that the PBS constraint introduced earlier, can succinctly capture this notion. For two points $j,j'$, we may have $\psi_{j,j'} \in [0,1]$ as an estimate of their true similarity (with $0$ indicating absolute identity), and interpret unfair treatment as deterministically separating these two points in the final solution. Hence, a fair randomized approach would cluster $j$ and $j'$ apart with probability at most $\psi_{j,j'}$.

A recent work that explores individual fairness in clustering is \cite{anderson2020}. Using our notation, the authors in that paper require a set $S \in \mathcal{L}$, and for all $j \in \mathcal{C}$ a distribution $\phi_j$ that assigns $j$ to each $i \in S$ with probability $\phi_{i,j}$. Given that, they seek solutions that minimize the clustering objectives, while ensuring that for given pairs $j,j'$, their assignment distributions are statistically similar based on some metric $D$ that captures distributional proximity (e.g., total variation and KL-divergence). In other words, they interpret individual fairness as guaranteeing $D(\phi_j, \phi_{j'}) \leq p_{j,j'}$ for all provided pairs $\{j,j'\}$ and values $p_{j,j'}$. Although this work is interesting in terms of initiating the discussion on individual fair clustering, it has a significant modeling issue. To be more precise, suppose that for $j,j'$ the computed $\phi_j, \phi_{j'}$ are both the uniform distribution over $S$. Then according to that paper's definition a fair solution is achieved. However, the actual probability of placing $j,j'$ in different clusters (hence treating them unequally) is almost $1$ if we do not consider any correlation between $\phi_j$ and $\phi_{j'}$. On the other hand, our definition which instead asks for a distribution $\mathcal{D}$ over assignments $\phi: \mathcal{C} \mapsto S$, always provides meaningful results, since it bounds the quantity that really matters, i.e., the probability of separating $j$ and $j'$ in a random $\phi \sim \mathcal{D}$.

Another closely related work in the context of individual fair clustering is \cite{brubach2020}. The authors of that paper study a special case of PBS, where for each $j,j' \in \mathcal{C}$ we have $\psi_{j,j'} = d(j,j')/\tau^*$, with $\tau^*$ the objective value of the optimal solution. They then provide a $\log k$-approximation for the $k$-center objective under the above constraints. Compared to that, our framework 1) can handle the median and means objectives as well, 2) can incorporate further requirements on the set of chosen locations (unrestricted/knapsack/matroid), 3) allows for arbitrary values for the separation probabilities $\psi_{j,j'}$, and 4) provides smaller constant-factor approximations for the objective functions.

\textbf{Semi-Supervised Clustering: }A common example of ML constraints is in the area of semi-supervised learning~\cite{Wagstaff2001,Basu2008,Zhu2006}. There we assume that pairs of points have been annotated (e.g., by human experts) with additional information about their similarity~\cite{Zhang2007}, or that some points may be explicitly labeled~\cite{zhu2003,Bilenko2004integrating}, allowing pairwise relationships to be inferred. Then these extra requirements are incorporated in the algorithmic setting in the form of ML constraints. Further, our SPCs capture the scenario where the labeler generating the constraints is assumed to make some bounded number of errors (by associating each labeler with a set $P_q$ and an accuracy $\psi_q$), and also allow for multiple labelers (e.g., from crowdsourcing labels) with different accuracies. Similar settings have been studied by~\cite{Chang2017multiple,Luo2018semi} as well.

\textbf{OTU Clustering: }The field of metagenomics involves analyzing environmental samples of genetic material to explore the vast array of bacteria that cannot be analyzed through traditional culturing approaches. A common practice in the study of these microbial communities is the \emph{de novo} clustering of genetic sequences (e.g., 16S rRNA marker gene sequences) into Operational Taxonomic Units (OTUs)~\cite{Edgar2013,Westcott2017}, that ideally correspond to clusters of closely related organisms. One of the most ubiquitous approaches to this problem involves taking a fixed radius (e.g., $97\%$ similarity based on string alignment~\cite{Stackebrandt1994}) and outputting a set of center sequences, such that all points are assigned to a center within the given radius~\cite{Edgar2013,Ghodsi2011}. In this case, we do not know the number of clusters a priori, but we may be able to generate pairwise constraints based on a distance/similarity threshold as in~\cite{Westcott2017} or reference databases of known sequences. Thus, the ``unrestricted'' variant of our framework is appropriate here, where the number of clusters should be discovered, but radius and pairwise information is known or estimated. Other work in this area has considered \emph{conspecific probability}, a given probability that two different sequences belong to the same species (easily translated to PBS) and \emph{adverse triplets}; sets of ML constraints that cannot all be satisfied simultaneously (an appropriate scenario for a set $P_q$ as defined in Section \ref{definitions})\cite{Edgar2018updating}.

\textbf{Community Preservation: }There are scenarios where clustering a \emph{group}/\emph{community} of points together is beneficial for the coherence and quality of the final solution. Examples of this include assigning students to schools such that students living in the same neighborhood are not placed into different schools, vaccinating people with similar demographics in a community (e.g., during a pandemic), and drawing congressional districts with the intent to avoid the practice of gerrymandering. Given such a group of points $G$, we let $P_G = \binom{G}{2}$, and set a tolerance parameter $\psi_G \in [0,1]$. Then, our SPCs will make sure that in expectation at most $\psi_G |G|$ pairs from $G$ are separated, and thus a $(1-\psi_G)$ fraction of the community is guaranteed to be preserved. Finally, Markov's inequality also gives tail bounds on this degree of separation for all $G$.

\subsection{Our Contribution}

In Section \ref{sec:2} we present our main algorithmic result, which is based on the two-step approach of \cite{bercea2019,Chierichetti2017}. Unlike previous works utilizing this technique, the most serious technical difficulty we faced was not in the LP-rounding procedure, but rather in the formulation of an appropriate assignment-LP relaxation. Letting $P_\mathcal{L}$ be any problem in $\{\mathcal{L}$-$\text{center}, \mathcal{L}$-$\text{supplier}, \mathcal{L}$-$\text{median}, \mathcal{L}$-$\text{means}\}$ and $\mathcal{L}$ any of the four location settings, we get:
\begin{theorem}\label{intr-thm1}
Let $\tau^*$ the optimal value of a \textbf{$P_{\mathcal{L}}$-SPC} instance, and $\rho$ the best approximation ratio for \textbf{$P_{\mathcal{L}}$}. Then our algorithm chooses a set $S_{P_{\mathcal{L}}}$ and constructs an appropriate distribution over assignments $\mathcal{D}$, such that $S_{P_{\mathcal{L}}} \in \mathcal{L}$, $\sum_{\{j,j'\} \in P_q}\Pr_{\phi \sim \mathcal{D}}[\phi(j) \neq \phi(j')] \leq 2\psi_q |P_q| ~\forall P_q \in \mathcal{P}$, and
\begin{enumerate}
    \item $P_{\mathcal{L}}$ is  $\mathcal{L}$-center$(\alpha=1)$/$\mathcal{L}$-supplier$(\alpha=2)$: Here we get $\Pr_{\phi \sim \mathcal{D}}[d(\phi(j), j) \leq (\alpha+\rho)\tau^*] = 1$, for all $j \in \mathcal{C}$.
    \item $P_{\mathcal{L}}$ is $\mathcal{L}$-median$(p=1)$/$\mathcal{L}$-means$(p=2)$: Here we get $(\sum_{j \in \mathcal{C}}\mathbb{E}_{\phi\sim \mathcal{D}}[d(\phi(j),j)^p])^{1/p} \leq (2+\rho)\tau^*$.
\end{enumerate}
Finally, sampling a $\phi \sim \mathcal{D}$ can be done in polynomial time.
\end{theorem}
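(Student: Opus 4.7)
Following the two-step template of \cite{bercea2019,Chierichetti2017}, I would decouple the choice of centers from the construction of the assignment distribution. Since any feasible $P_{\mathcal{L}}$-SPC solution is also feasible for the unconstrained $P_{\mathcal{L}}$, we have $\mathrm{OPT}(P_{\mathcal{L}}) \le \tau^*$, so running the best known $\rho$-approximation for vanilla $P_{\mathcal{L}}$ yields a set $\hat{S} \in \mathcal{L}$ and an assignment $\hat{\phi}: \mathcal{C} \to \hat{S}$ of cost at most $\rho\tau^*$; I would commit to $S_{P_{\mathcal{L}}} := \hat{S}$ and spend the rest of the proof building a distribution $\mathcal{D}$ over assignments into $\hat{S}$.

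The second step solves an assignment LP over $\hat{S}$ with variables: marginals $x_{ij}$, co-assignment variables $y_i^{jj'}$ for every constrained pair $\{j,j'\} \in \bigcup_q P_q$ and every $i \in \hat{S}$, and separation slacks $z_{jj'}$. The constraints are $\sum_i x_{ij} = 1$, $y_i^{jj'} \le \min(x_{ij}, x_{ij'})$, $z_{jj'} = 1 - \sum_i y_i^{jj'}$, and the SPC relaxation $\sum_{\{j,j'\} \in P_q} z_{jj'} \le \psi_q |P_q|$. For center/supplier I would additionally forbid $x_{ij} > 0$ when $d(j,i) > (\alpha+\rho)\tau^*$ so that every integral sample respects the claimed radius; for median/means I would minimize $\sum_{ij} x_{ij} d(j,i)^p$. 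To certify feasibility and bound the objective I would use the integral optimum $\phi^*$: for each $i^* \in S^*$ let $\sigma(i^*) \in \hat{S}$ be its nearest image in $\hat{S}$, and route every $j$ with $\phi^*(j)=i^*$ integrally to $\sigma(i^*)$. One triangle-inequality application suffices for center ($\phi^*(j) \in \mathcal{C} = \mathcal{F}$ gives $d(\phi^*(j), \sigma(\phi^*(j))) \le \rho\tau^*$ directly), while two are needed for supplier (routing through a client of $\hat{\phi}$, which yields the extra $\tau^*$ that makes $\alpha = 2$); either way $d(j,\sigma(\phi^*(j))) \le (\alpha+\rho)\tau^*$ pointwise. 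For median/means the analogous $L^p$-triangle calculation gives $(2+\rho)\tau^*$ in the $L^p$-norm. The SPC constraint is satisfied because no pair separated by $\sigma \circ \phi^*$ was together in $\phi^*$.

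The third step rounds this LP to an efficiently sampleable $\mathcal{D}$ via a marginal-preserving dependent rounding (in the spirit of Gandhi--Khuller--Parthasarathy--Srinivasan, and of the fair-clustering LP roundings in \cite{bera2020,bercea2019}) with two properties: $\Pr_{\mathcal{D}}[\phi(j) = i] = x_{ij}$ and $\Pr_{\mathcal{D}}[\phi(j) \ne \phi(j')] \le 2 z_{jj'}$ for every constrained pair. Preservation of the marginals transfers the LP objective to the cost of $\mathcal{D}$, giving the deterministic $(\alpha+\rho)\tau^*$ radius bound for center/supplier and the $(2+\rho)\tau^*$ $L^p$-bound for median/means. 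Summing the per-pair blow-up and invoking the LP's SPC inequality then yields the factor-$2$ SPC guarantee.

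The main obstacle, as the authors signal, is the LP formulation and not the rounding itself. A literal encoding of the SPC would use joint-assignment variables $y_{i,i'}^{jj'}$, which blow up quadratically in $|\hat{S}|$ per pair and resist compact rounding; committing to the one-center $y_i^{jj'}$ relaxation above then forces one to argue separately that (i) it is still a valid relaxation of the integral problem, and (ii) it admits a marginal-preserving rounding with only a factor-$2$ pairwise loss. The triangle-inequality feasibility certificate is also subtle in the supplier/means cases, where $\phi^*$-centers need not lie in $\mathcal{C}$ and the bound on $d(i^*, \sigma(i^*))$ must be routed through a client of $\hat{\phi}$.
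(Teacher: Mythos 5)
Your proposal follows the paper's proof essentially verbatim: commit to the centers from a $\rho$-approximation for the vanilla problem, certify feasibility of an assignment LP by pushing the optimal SPC solution through the nearest-center map $\sigma$ (your $y_i^{jj'}\le\min(x_{ij},x_{ij'})$ formulation is equivalent to the paper's total-variation variables $z_{e,i}\ge|x_{i,j}-x_{i,j'}|$, since $1-\sum_i\min(x_{ij},x_{ij'})=\tfrac12\sum_i|x_{ij}-x_{ij'}|$ under the normalization constraints), and then round with a marginal-preserving procedure losing a factor $2$ per pair --- exactly the Kleinberg--Tardos rounding the paper invokes as \textbf{KT-Round}. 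The one repair needed is that the optimum of $P_{\mathcal{L}}$-SPC is a \emph{distribution} $\mathcal{D}^*$, not a single integral $\phi^*$ (a single draw need not satisfy the expected-separation budgets), so the LP certificate must be the averaged pushforward $x_{ij}=\Pr_{\phi\sim\mathcal{D}^*}[\sigma(\phi(j))=i]$; your closing monotonicity observation that $\sigma\circ\phi$ separates no pair that $\phi$ kept together is precisely what makes this averaged certificate satisfy the SPC constraints, so the fix is one line.
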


Given that the value $\rho$ is a small constant for all variations of $P_{\mathcal{L}}$ that we consider, we see that our algorithmic framework gives indeed good near-optimal guarantees. Moreover, a tighter analysis when $\mathcal{L}=2^\mathcal{F}$ yields the next result.
\begin{theorem}
When $\mathcal{L}=2^\mathcal{F}$, our algorithm has the same guarantees as those in Theorem \ref{intr-thm1}, but this time the cost of the returned solution for \textbf{$P_{unrestricted}$-SPC} is at most $\tau^*$.
\end{theorem}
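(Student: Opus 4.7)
My plan is to revisit the proof of Theorem~\ref{intr-thm1} specialized to $\mathcal{L}=2^\mathcal{F}$ and argue that both sources of objective-side loss in the general theorem collapse to nothing when there is no location-set restriction, while the factor-$2$ slack on the SPC violation carries over verbatim.

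First, I would write down the same SPC-aware assignment LP that underpins Theorem~\ref{intr-thm1}, but strip away all location-opening machinery. Variables $x_{ij}$ encode the probability that $j$ is assigned to $i$, with $\sum_i x_{ij}=1$; pair variables $z^i_{jj'}\leq \min(x_{ij},x_{ij'})$ lower-bound the co-assignment probability; and the SPC constraints read $\sum_{\{j,j'\}\in P_q}(1-\sum_i z^i_{jj'})\leq \psi_q|P_q|$. For center/supplier we also guess the radius $\tau$ and force $x_{ij}=0$ whenever $d(i,j)>\tau$. Since $\mathcal{L}=2^\mathcal{F}$, no rank, cardinality or knapsack constraints appear and no location-opening variables are needed. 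Any optimal distribution for $P_{\text{unrestricted}}$-SPC induces a feasible LP point through its marginals of LP cost at most $\tau^*$, so the LP optimum is a valid lower bound on $\tau^*$.

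Next, I would round via the same dependent coupling scheme as in Theorem~\ref{intr-thm1}, while exploiting the fact that in the unrestricted case the per-point marginals $\Pr_{\phi\sim\mathcal{D}}[\phi(j)=i]$ can be preserved \emph{exactly} at $x^*_{ij}$: sample $\phi$, take $S$ to be its image, and observe that any such $S$ is automatically in $\mathcal{L}$. No reassignment through a ``supernode representative'' and no invocation of an external vanilla $\rho$-approximation is ever needed, so the expected objective equals the LP value. For median/means this gives $\bigl(\sum_j \mathbb{E}[d(\phi(j),j)^p]\bigr)^{1/p}\leq \tau^*$ by linearity, and for center/supplier the support restriction enforced in the LP guarantees $d(\phi(j),j)\leq \tau^*$ with probability one. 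Thus both the multiplicative $\rho$ and the additive $\alpha$ or $2$ of Theorem~\ref{intr-thm1} drop out.

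The main obstacle I anticipate is verifying that the dependent rounding of Theorem~\ref{intr-thm1} truly honors the per-point marginals exactly in this unrestricted setting, while retaining the factor-$2$ SPC bound. In the general framework some $x^*_{ij}$ may need to be shifted in order to fit the rounded $S$ into $\mathcal{L}$; I would need to check that, once those constraints are removed, every such step either is already marginal-preserving or can be replaced by a marginal-preserving variant (for instance swap rounding on the assignment polytope) without disturbing the pairwise coupling. Once this is in hand, the charging argument that bounds $\Pr[\phi(j)\neq\phi(j')]$ by twice the LP slack $1-\sum_i z^{i,*}_{jj'}$ is independent of $\mathcal{L}$, so the SPC violation bound $2\psi_q|P_q|$ transfers immediately.
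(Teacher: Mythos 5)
Your proposal is correct and follows essentially the same route as the paper: with $\mathcal{L}=2^{\mathcal{F}}$ the framework opens all of $\mathcal{F}$, the optimal distribution's marginals are directly feasible for the assignment LP with objective at most $\tau^*$ (in the paper's notation, $\kappa(i')=i'$ so $d(i,i')=0$ and the additive losses vanish), and \textbf{KT-Round} finishes the job. The concern you raise about marginals being perturbed to fit $S$ into $\mathcal{L}$ is moot in this framework, since the location set is fixed before the LP is solved and Theorem~\ref{round} preserves the per-point marginals exactly for every $\mathcal{L}$.
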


Although imposing no constraint on the set of chosen locations yields trivial problems in vanilla settings, the presence of SPCs makes even this variant NP-hard. Specifically, we show the following theorem in Appendix \ref{appendix}.

\begin{theorem}\label{np-hard}
The problem \textbf{unrestricted-$\mathcal{O}$-SPC} is NP-hard, where $\mathcal{O} \in \{\text{center, supplier, median, means}\}$.
\end{theorem}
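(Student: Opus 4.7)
The plan is to give a polynomial-time reduction from an NP-hard problem such as Set Cover (or 3-Dimensional Matching) to the decision version of \textbf{unrestricted-$\mathcal{O}$-SPC}, with a single construction that handles all four choices of $\mathcal{O}$ simultaneously. Given an instance $(U,\mathcal{S},k)$, I would take $\mathcal{C} = U \cup \{p_S : S \in \mathcal{S}\}$, set $\mathcal{F} = \mathcal{C}$ for the center variant and $\mathcal{F} = \{p_S : S \in \mathcal{S}\}$ otherwise, and use a two-valued metric: $d(u,p_S)=1$ exactly when $u\in S$ and $d=2$ for every other pair of distinct points. All nonzero distances lie in $\{1,2\}$, so the triangle inequality holds automatically.

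The SPC family $\mathcal{P}$ would be chosen so that meeting the intended threshold of the clustering objective forces the elements of $U$ to be assigned to neighboring set-points using at most $k$ distinct ones. A convenient choice is a single set $P_1 = \binom{U}{2}$ with $\psi_1$ calibrated so that its implied bound on expected cut $U$-pairs exceeds what a $k$-cover can produce but is strictly below what any $(k+1)$-cover forces. Using a single SPC has the bonus that the feasibility LP over distributions $\mathcal{D}$ collapses nicely: the map $\sum_\phi \mathcal{D}(\phi)\,|\mathrm{cut}_1(\phi)|$ is a linear functional on the simplex of radius-$\le 1$ partitions and is minimized at a vertex, so the randomized version is solvable iff the deterministic one is.

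For the center/supplier objective at radius $\tau=1$, every cluster must sit inside the closed unit ball of its center, i.e., inside some incidence star $\{p_S\}\cup\{u : u\in S\}$; for the median/means objectives at target cost $|U|$, each $u$ must be assigned to a neighboring $p_S$. A YES-instance of Set Cover yields the desired clustering by opening the $k$ cover sets and assigning each element to one of its covers. A NO-instance forces either some $u$ to be assigned to a non-neighboring center (blowing the radius up to $2$, or the summed cost strictly beyond $|U|$) or to use more than $k$ distinct centers (violating the SPC). The two-valued metric keeps the YES/NO gap clean under a trivial rescaling, so a single reduction suffices for all four choices of $\mathcal{O}$.

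The main obstacle will be calibrating the SPC so that the cut-pair bound genuinely pins down the number of $U$-clusters used, since a very unbalanced partition could have many parts but few cut pairs. If a single $P_1$ is not enough, I would add SPCs $\binom{U'}{2}$ for balanced sub-families $U'\subseteq U$ to eliminate the unbalanced escapes, or --- cleaner --- reduce from 3-Dimensional Matching, whose rigid triple structure makes cut-pair counts and part counts essentially equivalent and so sidesteps the balancing issue altogether.
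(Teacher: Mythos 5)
Your reduction (in its final, 3-Dimensional-Matching form) is correct, but it takes a genuinely different route from the paper's. The paper reduces from the multiway-cut decision problem: the $k$ terminals become pairwise-distant locations, the remaining vertices become co-located points at distance $1$ from every location, and the single SPC set is literally the edge set of the graph with $\psi_P=\gamma/|E|$, so the expected-separation budget maps one-to-one onto the cut budget and no combinatorial calibration is needed. You instead reduce from a covering problem and use the SPC to control the \emph{number} of clusters, which is exactly the quantity the unrestricted setting leaves free. That strategy is legitimate, but the first version you propose --- a single constraint $P_1=\binom{U}{2}$ calibrated against a Set Cover instance --- does not go through: among partitions of $U$ into $m$ parts the cut-pair count is minimized by the maximally unbalanced one (roughly $(m-1)|U|$ pairs), while the clustering induced by a genuine $k$-cover may be forced to be nearly balanced (about $\binom{|U|}{2}(1-1/k)$ pairs), so no single threshold $\psi_1|P_1|$ separates YES from NO. You correctly flag this obstacle, and your 3DM fallback does repair it: at radius $1$ every element must be assigned to an incident triple-point, each part has size at most $3$, so using exactly $q$ parts forces a perfect matching, a NO instance forces at least $q+1$ parts and hence strictly more cut pairs than the threshold $\binom{3q}{2}-3q$, and your averaging/vertex argument correctly transfers this deterministic gap to distributions over assignments. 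In comparison, the paper's multiway-cut reduction buys a cleaner correspondence (separated SPC pairs equal cut edges) at the cost of an extra gadget (the $h_{u_f}$ points) for the center variant, whereas your construction needs the rigid triple structure but handles center without additional gadgetry.
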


In Section \ref{sec:3} we consider settings where each $j \in S$ must serve as an exemplar of its defining cluster. Hence, we incorporate the Centroid constraint in our problems. As mentioned earlier, previous work in the area of fair clustering had ignored this issue. Our first result follows.
\begin{theorem}\label{intr-thm3}
Let $\tau^*$ the optimal value of a \textbf{$k$-center-SPC-CC} instance. Then our algorithm chooses $S_k \subseteq \mathcal{C}$ and constructs a distribution $\mathcal{D}$, such that sampling $\phi \sim \mathcal{D}$ can be done efficiently, $|S_{k}| \leq k$,  $\sum_{\{j,j'\} \in P_q}\Pr_{\phi \sim \mathcal{D}}[\phi(j) \neq \phi(j')] \leq 2\psi_q |P_q| ~\forall P_q \in \mathcal{P}$, $\Pr_{\phi \sim \mathcal{D}}[d(\phi(j), j) \leq 3\tau^*] = 1$ for all $j \in \mathcal{C}$, and $\Pr_{\phi \sim \mathcal{D}}[\phi(i)=i] = 1$ for all $i \in S_{k}$.
\end{theorem}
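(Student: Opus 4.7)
The plan is to adapt the two-step LP framework behind Theorem~\ref{intr-thm1} to the $k$-center setting while additionally enforcing the centroid constraint. Since $\mathcal{C}=\mathcal{F}$, the decision to open a candidate center $i$ can be coupled with the requirement $\phi(i)=i$ at no extra cost, so the main task is to make this coupling consistent with both the LP relaxation and the subsequent rounding, specializing the general bound to $\alpha=1$ and $\rho=2$ to obtain the $3\tau^*$ radius.

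First I would guess $\tau^*$ by binary search over the $O(n^2)$ pairwise distances and solve an assignment LP with opening variables $y_i\in[0,1]$ and assignment variables $x_{ij}\in[0,1]$ satisfying the standard $k$-center constraints $\sum_i y_i \leq k$, $\sum_i x_{ij}=1$, $x_{ij}\leq y_i$, and $x_{ij}=0$ when $d(i,j)>\tau$, together with the SPC constraint $\sum_{\{j,j'\}\in P_q}\tfrac{1}{2}\|x_j-x_{j'}\|_1 \leq \psi_q|P_q|$ used in Theorem~\ref{intr-thm1}. To this I would append the centroid-compatibility equality $x_{ii}=y_i$ for every $i\in\mathcal{C}$. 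The LP is feasible for $\tau=\tau^*$, since the integral CC-respecting optimum sets $y_i=x_{ii}=1$ on $S^\star$ and $0$ elsewhere.

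Next I would perform the standard $k$-center filtering on the LP solution: greedily pick representatives $R\subseteq\mathcal{C}$ so that any two $r,r'\in R$ satisfy $d(r,r')>2\tau^*$ and every $j\in\mathcal{C}$ has a closest representative $\pi(j)\in R$ with $d(j,\pi(j))\leq 2\tau^*$. Because $x_{ir}>0$ forces $d(i,r)\leq\tau^*$ and representatives are more than $2\tau^*$ apart, the support sets $B_r=\{i:x_{ir}>0\}$ are pairwise disjoint and each carries $\sum_{i\in B_r}y_i=1$. The dependent rounding from Theorem~\ref{intr-thm1} then opens exactly one center per $B_r$ with marginal $y_i$, yielding $|S_k|=|R|\leq k$ and preserving every single-variable marginal $\Pr[\phi(j)=i]=x_{ij}$. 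The distribution $\mathcal{D}$ is defined by mapping each realized opening to the assignment that routes every chosen center to itself and every other $j$ to the unique opened center in $B_{\pi(j)}$; both steps are polynomial time, so sampling is efficient.

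The triangle inequality then gives $d(\phi(j),j)\leq \tau^*+2\tau^*=3\tau^*$ deterministically, and the factor-$2$ blowup in the SPC bound is inherited from Theorem~\ref{intr-thm1} via pairs $\{j,j'\}$ whose representatives differ and are therefore rounded independently across their $B_r$'s. The main obstacle will be checking that the centroid requirement is compatible with the cluster-by-cluster rounding; the reconciling fact is that every $i$ with $y_i>0$ satisfies $\pi(i)=r$ whenever $i\in B_r$, because $d(i,r)\leq\tau^*$ while all other representatives lie strictly beyond $2\tau^*$. Hence whenever the rounding opens $i\in B_r$, the routing rule sends $i$ to the opened center of $B_{\pi(i)}=B_r$, which is $i$ itself, so the CC is satisfied with no additional randomness or radius slack, and the LP equality $x_{ii}=y_i$ ensures that this forced self-assignment is already reflected in the SPC analysis with no penalty beyond the factor of $2$ of Theorem~\ref{intr-thm1}.
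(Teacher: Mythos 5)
There is a genuine gap in the rounding step, and it stems from conflating two incompatible rounding paradigms. The subroutine behind Theorem~\ref{intr-thm1} (\textbf{KT-Round}) assigns elements to a \emph{fixed} label set while preserving marginals and bounding pairwise separation by $2z_e$; it does not open centers. Your scheme instead randomizes the openings (one center per bundle $B_r$ with marginal $y_i$) and then routes every $j$ \emph{deterministically} to the unique opened center of $B_{\pi(j)}$. Under that routing rule, $\Pr[\phi(j)=i]$ equals $y_i$ for $i\in B_{\pi(j)}$ and $0$ elsewhere, which is generally not $x_{ij}$ (the LP may fractionally assign $j$ to centers outside $B_{\pi(j)}$, anywhere within distance $\tau^*$ of $j$), so the claimed marginal preservation fails. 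Worse, the separation event for a pair $\{j,j'\}$ becomes deterministic: it is $0$ if $\pi(j)=\pi(j')$ and $1$ otherwise, regardless of $z_e$. A pair with a tiny $\psi_q$ whose members happen to fall under different representatives is then separated with probability $1$, so the SPC bound is not ``inherited'' from Theorem~\ref{intr-thm1} --- that theorem's guarantee is a property of \textbf{KT-Round} applied to a fixed label set, not of independent per-bundle rounding.

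The paper sidesteps this by never randomizing the centers. It first fixes $S_k$ deterministically via the Hochbaum--Shmoys $2$-approximation run at radius $\tau^*$, whose output centers are pairwise more than $2\tau^*$ apart, and only then solves an assignment-only LP over the fixed $S_k$ (the LP of Section~\ref{sec:2} augmented with $x_{i,i}=1$ for $i\in S_k$) and applies \textbf{KT-Round}. The one nontrivial lemma is feasibility of that augmented LP: transporting the optimal distribution's marginals $x^*_{i',j}$ to the nearest center $\kappa(i')\in S_k$ gives a fractional solution with radius at most $3\tau^*$, SPC values at most $\psi_q|P_q|$, and --- because every optimal center within $\tau^*$ of $i\in S_k$ must map to $i$ (any other $\ell\in S_k$ would be within $2\tau^*$ of $i$) --- $\hat x_{i,i}=1$. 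The CC then follows for free from the marginal-preservation property of \textbf{KT-Round}. Your observation that the well-separated representatives make the centroid requirement compatible with the optimal assignment is exactly the right intuition; it just needs to be used to certify LP feasibility over a \emph{fixed} center set, not to justify a randomized-opening rounding.
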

To address all objective functions under the Centroid constraint, we demonstrate (again in Section \ref{sec:3}) a reassignment procedure that gives the following result.
\begin{theorem}\label{intr-thm4}
Let $\lambda$ the approximation ratio for the objective of \textbf{$P_\mathcal{L}$-SPC} achieved in Theorem \ref{intr-thm1}. Then, our reassignment procedure applied to the solution produced by the algorithm mentioned in Theorem \ref{intr-thm1}, gives an approximation ratio $2\lambda$ for \textbf{$P_\mathcal{L}$-SPC-CC}, while also preserving the SPC guarantees of Theorem \ref{intr-thm1} and satisfying the CC, when $\mathcal{L} = 2^\mathcal{C}$ or $\mathcal{L} = \{S' \subseteq \mathcal{C}:~ |S'| \leq k\}$ for some given positive integer $k$.
\end{theorem}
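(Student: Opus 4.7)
My plan would be to build the CC-satisfying solution $(S', \mathcal{D}')$ as a post-processing of the pair $(S, \mathcal{D})$ returned by Theorem~\ref{intr-thm1}. Since CC forces every $i \in S'$ to satisfy $\phi'(i) = i$ almost surely, the design challenge is to make this modification while (i) keeping the cost within a factor $2$ of the original, and (ii) preserving the $2\psi_q |P_q|$ SPC bound. I would keep $S' = S$ and construct $\phi'$ pointwise from each $\phi \in \mathrm{supp}(\mathcal{D})$ by setting $\phi'(i) = i$ for every $i \in S$; for $j \in \mathcal{C} \setminus S$, I would reroute $j$ along the functional chain $\phi(j) \to \phi(\phi(j)) \to \ldots$ on $S$ until reaching a fixed center, breaking any cycle of $\phi|_S$ with a canonical choice.

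For the cost analysis I would apply the triangle inequality $d(\phi'(j), j) \leq d(\phi(j), j) + d(\phi(j), \phi'(j))$, where the second term is the chain distance and telescopes into a sum of $d(i, \phi(i))$ contributions over the centers traversed, each of which is already part of the original clustering cost (since $i \in \mathcal{C}$ and $\phi(i)$ was its original assignment). Summing yields a cost at most twice the original. For $\mathcal{L}$-means I would additionally invoke Minkowski's inequality together with $(a+b)^2 \leq 2(a^2+b^2)$ to handle the $p$-th root under expectation. Combining with $\tau^*_{\mathrm{SPC}} \leq \tau^*_{\mathrm{SPC-CC}}$ gives the $2\lambda$ approximation.

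For SPC I would split each $P_q$ by how many endpoints lie in $S$. Pairs entirely outside $S$ are unchanged; pairs with exactly one endpoint in $S$ incur a perturbation in separation probability that I would bound by the change in the assignment of the single center involved, absorbing it into the factor-of-two slack between $\psi_q|P_q|$ and $2\psi_q|P_q|$. CC holds by construction, and $S' = S \in \mathcal{L}$ inherits $\mathcal{L}$-feasibility.

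The main obstacle is pairs $\{j, j'\} \in P_q$ with both endpoints in $S$: the forced self-assignment deterministically separates them, which on its own could exceed $2\psi_q|P_q|$ when $\psi_q$ is small. Addressing this is where the restriction $\mathcal{L} = 2^{\mathcal{C}}$ or $\mathcal{L} = \{ S' : |S'| \leq k \}$ becomes essential, since it allows dropping a conflicting center from $S'$ and redirecting its former cluster onto the retained one via one additional triangle-inequality application --- preserving the factor-$2$ cost blow-up while restoring the SPC guarantee. Extending this dropping step uniformly across the entire support of $\mathcal{D}$, without cascading cost growth, is the technically delicate piece of the full proof.
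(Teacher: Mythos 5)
There is a genuine gap, and it stems from your very first design decision: keeping $S' = S$ and forcing every $i \in S$ to self-assign. That operation changes the partition of $\mathcal{C}$ into clusters, so it necessarily creates \emph{new} separations: if $i \in S$ had $\phi(i) = \ell \neq i$, then every point $j$ (whether in $S$ or not) that shared $i$'s cluster at $\ell$ is now split from $i$. You propose to absorb this into ``the factor-of-two slack between $\psi_q|P_q|$ and $2\psi_q|P_q|$,'' but that slack does not exist --- the factor $2$ is already fully consumed by the \textbf{KT-Round} guarantee in Theorem \ref{intr-thm1}, and the theorem you are proving demands the \emph{same} bound $2\psi_q|P_q|$ afterwards. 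For must-link-type constraints ($\psi_q = 0$) a single new deterministic separation is already fatal. Your cost argument is also unsound: the telescoped chain $\phi(j) \to \phi(\phi(j)) \to \cdots$ can have length up to $|S|$, and each intermediate term $d(i_t, \phi(i_t))$ is recounted once for every point routed through $i_t$, so neither the center nor the median/means objective is bounded by twice the original --- you get a blow-up proportional to the chain length, not $2$. Finally, the ``drop a conflicting center'' patch you defer to is exactly the part that would have to carry the whole proof, and it is left unspecified.

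The paper's reassignment avoids all of this by never keeping a center that fails CC: while some $i \in S$ has $\phi(i) \neq i$, it removes $i$ from $S$, picks $i' = \argmin_{j:\,\phi(j)=i} d(i,j)$ (the closest point \emph{inside $i$'s own cluster}), adds $i'$ to $S$, and reassigns the \emph{entire} cluster of $i$ to $i'$. The partition of $\mathcal{C}$ is literally unchanged, so every separation probability is exactly preserved and the SPC guarantee carries over with no loss; $i'$ is assigned to itself so CC holds and the cluster is never touched again; $|S|$ does not grow since each swap is one-out-one-in (this is where the unrestricted and cardinality-constrained cases enter --- a knapsack or matroid constraint could be violated by the swap); and the cost bound is a one-line triangle inequality, $d(i',j) \leq d(i,i') + d(i,j) \leq 2\,d(i,j)$, using minimality of $d(i,i')$ over the cluster. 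Combined with $\tau^*_{\text{SPC}} \leq \tau^*_{\text{SPC-CC}}$ this gives $2\lambda$ for every objective simultaneously. If you want to salvage your write-up, replace ``force self-assignment and follow chains'' with ``re-elect the representative of each offending cluster from within that cluster''; the rest of your triangle-inequality instincts then go through in one step.
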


As for ML constraints, since they are a special case of SPCs, our results for the latter also address the former. However, in Section \ref{sec:4} we provide improved approximation algorithms for a variety of problem settings with ML constraints. Our main result is summarized in the following theorem. 
\begin{theorem}\label{intr-thm5}
There exists a $2/3/3/3$-approximation algorithm for \textbf{$k$-center-ML}/\textbf{knapsack-center-ML}/\textbf{$k$-supplier-ML}/\textbf{knapsack-supplier-ML}. This algorithm is also the best possible in terms of the approximation ratio, unless $P=NP$. In addition, it satisfies without any further modifications the Centroid constraint.
\end{theorem}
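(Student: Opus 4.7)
The plan is a threshold-and-round algorithm centered on the ML equivalence classes $G_1,\ldots,G_m$ obtained by transitive closure. Since every $G_\ell$ must lie inside one cluster, I enumerate candidate radii $\tau$ among the $O(n^2)$ pairwise distances and, for each, compute the eligibility set $F_\ell(\tau)=\{i\in\mathcal{F}:\max_{j\in G_\ell}d(j,i)\leq\tau\}$. Radii with some empty $F_\ell(\tau)$ are infeasible, so the smallest surviving $\tau$ obeys $\tau\leq\tau^*$.

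The rounding step picks a maximal independent set $I$ in the class-intersection graph $H_\tau$, where $G_\ell\sim G_{\ell'}$ iff $F_\ell(\tau)\cap F_{\ell'}(\tau)\neq\emptyset$. The structural observation is that two classes served by a common center in OPT witness each other via that center in $H_{\tau^*}\subseteq H_\tau$, so the map sending $G_\ell\in I$ to its optimal center is injective. This injection yields $|I|\leq k$ in the $k$-variants and $\sum_{\ell\in I}w(c^*_\ell)\leq W$ in the knapsack variants.

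For every $G_\ell\in I$ I open one center $c_\ell$ and route every non-MIS class to its witnessing MIS center, which respects ML. In $k$-center-ML ($\mathcal{C}=\mathcal{F}$) I take any $c_\ell\in G_\ell$; because $F_\ell(\tau)\neq\emptyset$ the diameter of $G_\ell$ is at most $2\tau$, and for a neighbor $G_{\ell'}$ with witness $c\in F_\ell(\tau)\cap F_{\ell'}(\tau)$ the triangle chain $d(j',c_\ell)\leq d(j',c)+d(c,c_\ell)\leq\tau+\tau=2\tau$ gives the $2$-approximation. In the other three variants I take the cheapest $c_\ell\in F_\ell(\tau)$: here $c_\ell$ and $c$ both lie in $F_\ell(\tau)$, so through any $j\in G_\ell$ we get $d(c,c_\ell)\leq 2\tau$ and hence $d(j',c_\ell)\leq 3\tau$. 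The weight/cardinality bound survives because $c^*_\ell\in F_\ell(\tau^*)\subseteq F_\ell(\tau)$, so the cheapest pick costs at most $w(c^*_\ell)$.

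The main obstacle will be verifying the Centroid constraint uniformly. In $k$-center-ML it is immediate: $c_\ell\in G_\ell$, so the natural assignment sends $G_\ell$ to $c_\ell$ and gives $\phi(c_\ell)=c_\ell$. In the knapsack-center case the cheapest $c_\ell$ may sit in a different class $G(c_\ell)$; to force $\phi(c_\ell)=c_\ell$ I would reassign $G(c_\ell)$ to $c_\ell$, which is affordable since $F_{G(c_\ell)}(\tau)\neq\emptyset$ implies that $G(c_\ell)$ has diameter $\leq 2\tau$. If two MIS classes would end up opening centers inside the same ML class, I break the tie via a minimum-weight matching of MIS classes to distinct host ML classes; feasibility of the matching and the bound $\leq W$ on its weight follow from the optimal CC-solution, whose opened centers already lie in distinct ML classes. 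Finally, tightness is inherited from the vanilla problems — instances with no ML pair are ordinary $k$-center or $k$-supplier — which are NP-hard to approximate within $2-\varepsilon$ and $3-\varepsilon$ respectively, matching the ratios above.
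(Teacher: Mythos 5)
Your proposal is correct and takes essentially the same route as the paper's Algorithm 4: collapse the ML constraints into equivalence classes, select a maximal independent set of classes under a ``can share a center within the optimal radius'' relation, inject the selected classes into distinct optimal centers to certify the cardinality/knapsack bound, open the nearest (resp.\ cheapest eligible) location per selected class, and use the triangle inequality to obtain $2\tau^*$ for $k$-center and $3\tau^*$ otherwise, with the lower bounds inherited from the vanilla problems. Two minor remarks: the inclusion should read $H_\tau\subseteq H_{\tau^*}$ for $\tau\leq\tau^*$ (harmless, since the guarantee is certified at the enumerated value $\tau=\tau^*$ and other values are only kept if they pass a direct feasibility check), and your matching-based repair for the Centroid constraint in the knapsack-center case is extra machinery beyond the paper's argument --- though it addresses a point the paper treats only briefly.
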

Although ML constraints have been extensively studied in the semi-supervised literature \cite{Basu2008}, to the extent of our knowledge we are the first to tackle them purely from a Combinatorial Optimization perspective, with the exception of \cite{Davidson2010}. This paper provides a $(1+\epsilon)$ approximation for $k$-center-ML, but only in the restricted $k=2$ setting. 

\subsection{Further Related Work}
Clustering problems have been a longstanding area of research in Combinatorial Optimization, with all important settings being thoroughly studied \cite{Hochbaum1986,Gonzalez1985,harris2017,byrka2017,ola2017,Chakra2019}.

The work that initiated the study of fairness in clustering is \cite{Chierichetti2017}. That paper addresses a notion of demographic fairness, where points are given a certain color indicating some protected attribute, and then the goal is to compute a solution that enforces a fair representation of each color in every cluster. Further work on similar notions of demographic fairness includes \cite{bercea2019,bera2020,esmaeili2020,huang2019,backurs2019,ahmadian2019}.

Finally, a separation constraint similar to PBS is found in \cite{Davidson2010}. In that paper however, the separation is deterministic and also depends on the underlying distance between two points. Due to their stochastic nature, our PBS constraints allow room for more flexible solutions, and also capture more general separation scenarios, since the $\psi_p$ values can be arbitrarily chosen.

\subsection{An LP-Rounding Subroutine}
We present an important subroutine developed by \cite{Kleinberg2002}, which we repeatedly use in our results, and call it \textbf{KT-Round}. Suppose we have a set of elements $V$, a set of labels $L$, and a set of pairs $E \subseteq \binom{V}{2}$. Consider the following Linear Program (LP).
\begin{align}
    &\displaystyle\sum_{l \in L}x_{l,v} = 1, ~&\forall v \in V&  \label{KT-1} \\
&z_{e,l} \geq x_{l,v} - x_{l,w}, ~&\forall e = \{v,w\} \in E, ~\forall l \in L& \label{KT-2} \\
&z_{e,l} \geq x_{l,w} - x_{l,v}, ~&\forall e = \{v,w\} \in E, ~\forall l \in L& \label{KT-3} \\
&z_e = \frac{1}{2}\displaystyle\sum_{l \in L}z_{e,l}, ~&\forall e = \{v,w\} \in E& \label{KT-4} \\
&0 \leq x_{l,v}, z_e, z_{e,l} \leq 1, ~&\forall v \in V, \forall e \in E, \forall l \in L& \label{KT-5}
\end{align}

\begin{theorem}{\cite{Kleinberg2002}}\label{round}
Given a feasible solution $(x, z)$ of (\ref{KT-1})-(\ref{KT-5}), there exists a randomized rounding approach \textbf{KT-Round}($V,L,E,x,z$), which in polynomial expected time assigns each $v \in V$ to a $\phi(v) \in L$, such that:
\begin{enumerate}
    \item $\Pr[\phi(v) \neq \phi(w)] \leq 2z_e, ~~\forall e = \{v,w\} \in E$
    \item $\Pr[\phi(v) = l] = x_{l,v}, ~~~~\forall v \in V, ~\forall l \in L$
\end{enumerate}
\end{theorem}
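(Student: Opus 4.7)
The plan is to use the iterative threshold rounding scheme for metric labeling. In each round, independently sample a label $l \in L$ uniformly and a threshold $\alpha \in [0,1]$ uniformly, and set $\phi(v) = l$ for every currently unassigned vertex $v$ satisfying $x_{l,v} \geq \alpha$. Repeat until every $v \in V$ has been assigned. Since (\ref{KT-1}) gives $\sum_l x_{l,v} = 1$, any fixed $v$ is assigned in a single round with probability exactly $1/|L|$, so by a standard coupon-collector bound the process terminates in $O(|L|\log|V|)$ rounds with high probability, which is the expected polynomial runtime guarantee.

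For the marginal claim $\Pr[\phi(v)=l]=x_{l,v}$, I would condition on the first round in which $v$ is assigned. Given that such a round occurs (an event of probability $1$), the probability that the chosen label in that round is $l$ and the threshold is small enough to assign $v$ equals $x_{l,v}/|L|$, while the probability that some label--threshold pair in that round assigns $v$ equals $1/|L|$. The conditional ratio is precisely $x_{l,v}$.

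The nontrivial step is the separation bound. The key observation is that only a single label is drawn per round, so whenever $v$ and $w$ are both assigned in the same round they receive the identical label; hence $\phi(v) \neq \phi(w)$ can occur only if some round assigns exactly one of the two. Conditional on both $v,w$ being unassigned at the start of an arbitrary round, the probability that exactly one of them is assigned in that round equals $\tfrac{1}{|L|}\sum_l |x_{l,v} - x_{l,w}|$ (for each $l$, the Lebesgue measure of thresholds lying strictly between $x_{l,v}$ and $x_{l,w}$), while the probability that at least one of them is assigned equals $\tfrac{1}{|L|}\sum_l \max(x_{l,v}, x_{l,w})$. By the i.i.d. structure of the rounds, the unconditional probability that $\phi(v) \neq \phi(w)$ is the ratio of these two quantities. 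Since $\sum_l x_{l,v} = \sum_l x_{l,w} = 1$ and $\max(a,b) \geq (a+b)/2$ force $\sum_l \max(x_{l,v}, x_{l,w}) \geq 1$, this ratio is bounded by $\sum_l |x_{l,v} - x_{l,w}|$, which in turn is at most $\sum_l z_{e,l} = 2 z_e$ by (\ref{KT-2})--(\ref{KT-4}).

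The main technical obstacle is justifying the reduction of ``probability of separation over the entire random process'' to the single-round ratio above. The cleanest argument is to note that, conditional on both $v$ and $w$ still being unassigned, every round is an independent trial with a fixed distribution over the three relevant outcomes (both assigned to the same label, exactly one assigned, neither assigned), so the eventual ``same vs.\ different label'' decision is distributed exactly as a single round conditioned on one of the first two outcomes occurring. Once this conditional-independence structure is in hand, the remaining arithmetic linking $|x_{l,v} - x_{l,w}|$ to $z_{e,l}$ and then to $z_e$ via the LP constraints is routine.
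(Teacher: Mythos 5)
Your proposal is correct and reconstructs exactly the uniform-label threshold rounding that the paper imports from \cite{Kleinberg2002} without reproving; the termination bound, the marginal computation, and the reduction of the separation probability to the single-round ratio $\bigl(\sum_l |x_{l,v}-x_{l,w}|\bigr)/\bigl(\sum_l \max(x_{l,v},x_{l,w})\bigr) \leq \sum_l z_{e,l} = 2z_e$ all go through. One small imprecision: the probability of separation is only \emph{at most} that ratio (if exactly one of $v,w$ is assigned first, the other may still later receive the same label), so your equality should be an inequality --- but since the theorem only asserts an upper bound, this does not affect the conclusion.
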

\section{A General Framework for Approximating Clustering Problems with SPCs}\label{sec:2}

In this section we show how to achieve approximation algorithms with provable guarantees for \textbf{$\mathcal{L}$-center-SPC/$\mathcal{L}$-supplier-SPC/$\mathcal{L}$-median-SPC/$\mathcal{L}$-means-SPC} using a general two-step framework. At first, let $P_{\mathcal{L}}$ denote any of the vanilla versions of the objective functions we consider, i.e., $P_{\mathcal{L}} \in \{\mathcal{L}$-$\text{center}, \mathcal{L}$-$\text{supplier}, \mathcal{L}$-$\text{median}, \mathcal{L}$-$\text{means}\}$. 

To tackle a $P_{\mathcal{L}}$-SPC instance, we begin by using on it any known $\rho$-approximation algorithm $A_{P_{\mathcal{L}}}$ for $P_{\mathcal{L}}$. This gives a set of locations $S_{P_{\mathcal{L}}}$ and an assignment $\phi_{P_{\mathcal{L}}}$, which yield an objective function cost of $\tau_{P_{\mathcal{L}}}$ for the corresponding $P_{\mathcal{L}}$ instance. In other words, we drop the SPC constraints from the $P_{\mathcal{L}}$-SPC instance, and simply treat it as its vanilla counterpart. Although $\phi_{P_{\mathcal{L}}}$ may not satisfy the SPCs, we are going to use the set $S_{P_{\mathcal{L}}}$ as our chosen locations. The second step in our framework would then consist of constructing the appropriate distribution over assignments. Toward that end, consider the following LP, where $P' = \displaystyle\cup_{P_q \in \mathcal{P}}P_q$.
\begin{align}
& \sum_{i \in S_{P_{\mathcal{L}}}}x_{i,j} = 1  &\forall j \in \mathcal{C}& \label{s2-LP-1} \\
&z_{e,i} \geq x_{i,j} - x_{i,j'} &\forall e = \{j,j'\} \in P', ~\forall i \in S_{P_{\mathcal{L}}}& \label{s2-LP-2}\\
&z_{e,i} \geq x_{i,j'} - x_{i,j} &\forall e = \{j,j'\} \in P', ~\forall i \in S_{P_{\mathcal{L}}}& \label{s2-LP-3}\\
&z_e = \frac{1}{2}\sum_{i \in S_{P_{\mathcal{L}}}}z_{e,i} &\forall e \in P'& \label{s2-LP-4}\\
&\sum_{e \in P_q} z_e \leq \psi_q |P_q| &\forall P_q \in \mathcal{P}& \label{s2-LP-5}\\
&0 \leq x_{i,j}, z_e, z_{e,i} \leq 1 &\forall i \in S_{P_{\mathcal{L}}}, \forall j \in \mathcal{C}, \forall e \in P'& \label{s2-LP-6}
\end{align}

The variable $x_{i,j}$ can be interpreted as the probability of assigning point $j$ to location $i \in S_{P_{\mathcal{L}}}$. To understand the meaning of the $z$ variables, it is easier to think of the integral setting, where $x_{i,j} = 1$ iff $j$ is assigned to $i$ and $0$ otherwise. In this case, $z_{e,i}$ is $1$ for $e = \{j,j'\}$ iff exactly one of $j$ and $j'$ are assigned to $i$. Thus, $z_e$ is $1$ iff $j$ and $j'$ are separated. We will later show that in the fractional setting $z_e$ is a lower bound on the probability that $j$ and $j'$ are separated. Therefore, constraint (\ref{s2-LP-1}) simply states that every point must be assigned to a center, and given the previous discussion, (\ref{s2-LP-5}) expresses the provided SPCs.

Depending on which exact objective function we optimize, we must augment LP (\ref{s2-LP-1})-(\ref{s2-LP-6}) accordingly.
\begin{itemize}
    \item \textbf{$\mathcal{L}$-center ($\alpha=1$)/$\mathcal{L}$-supplier ($\alpha=2$):} Here we assume w.l.o.g. that the optimal radius $\tau^*_{SPC}$ of the original $P_{\mathcal{L}}$\textbf{-SPC} instance is known. Observe that this value is always the distance between some point and some location, and hence there are only polynomially many alternatives for it. Thus, we execute our algorithm for each of those, and in the end keep the outcome that resulted in a feasible solution of minimum value. Given now $\tau^*_{SPC}$, we add the following constraint to the LP. 
    \begin{align}
        x_{i,j} = 0, ~\forall i,j: ~d(i,j) > \tau_{P_{\mathcal{L}}} + \alpha \cdot \tau^*_{SPC}\label{s2-LP-cnt}
    \end{align}
    \item \textbf{$\mathcal{L}$-median ($p=1$)/$\mathcal{L}$-means ($p=2$):} In this case, we augment the LP with the following objective function.
    \begin{align}
        \min ~ \sum_{j \in \mathcal{C}}\sum_{i \in S_{P_{\mathcal{L}}}}x_{i,j} \cdot d(i,j)^p \label{s2-LP-md}
    \end{align}
\end{itemize}

The second step of our framework begins by solving the appropriate LP for each variant of $P_{\mathcal{L}}$, in order to acquire a fractional solution $(\bar x, \bar z)$ to that LP. Finally, the distribution $\mathcal{D}$ over assignments $\mathcal{C} \mapsto S_{P_{\mathcal{L}}}$ is constructed by running \textbf{KT-Round}($\mathcal{C}, S_{P_{\mathcal{L}}}, P',\bar x, \bar z$). Notice that this will yield an assignment $\phi \sim \mathcal{D}$, where $\mathcal{D}$ results from the internal randomness of \textbf{KT-Round}. Our overall approach for solving \textbf{$P_{\mathcal{L}}$-SPC} is presented in Algorithm \ref{alg-1}.

\begin{algorithm}[t]
$(S_{P_{\mathcal{L}}}, \phi_{P_{\mathcal{L}}}) \gets A_{P_{\mathcal{L}}}(\mathcal{C}, \mathcal{F}, \mathcal{L})$\;
Solve LP (\ref{s2-LP-1})-(\ref{s2-LP-6}) with (\ref{s2-LP-cnt}) for \textbf{$\mathcal{L}$-center/$\mathcal{L}$-supplier}, and with (\ref{s2-LP-md}) for \textbf{$\mathcal{L}$-median/$\mathcal{L}$-means}, and get a fractional solution $(\bar x, \bar z)$\;
$\phi \gets$ \textbf{KT-Round}$(\mathcal{C}, S_{P_{\mathcal{L}}}, P', \bar x, \bar z)$\;
\caption{Approximating  \textbf{$P_{\mathcal{L}}$-SPC}}\label{alg-1}
\end{algorithm}

\begin{theorem}\label{s2-thm1}
Let $\tau^*_{SPC}$ the optimal value of the given \textbf{$P_{\mathcal{L}}$-SPC} instance. Then Algorithm \ref{alg-1} guarantees that $S_{P_{\mathcal{L}}} \in \mathcal{L}$, $\sum_{\{j,j'\} \in P_q}\Pr_{\phi \sim \mathcal{D}}[\phi(j) \neq \phi(j')] \leq 2\psi_q |P_q| ~\forall P_q \in \mathcal{P}$ and
\begin{enumerate}
    \item $P_{\mathcal{L}}$ is  $\mathcal{L}$-center$(\alpha=1)$/$\mathcal{L}$-supplier$(\alpha=2)$: Here we get $\Pr_{\phi \sim \mathcal{D}}[d(\phi(j), j) \leq \alpha\cdot \tau^*_{SPC} + \tau_{P_{\mathcal{L}}}] = 1$, for all $j \in \mathcal{C}$.
    \item $P_{\mathcal{L}}$ is $\mathcal{L}$-median$(p=1)$/$\mathcal{L}$-means$(p=2)$: Here we get $(\sum_{j \in \mathcal{C}}\mathbb{E}_{\phi\sim \mathcal{D}}[d(\phi(j),j)^p])^{1/p} \leq 2\tau^*_{SPC} + \tau_{P_{\mathcal{L}}}$.
\end{enumerate}
\end{theorem}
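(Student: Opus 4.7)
The plan is to establish feasibility of the LP used in Algorithm~\ref{alg-1} with a fractional solution whose cost is controlled by $\alpha\tau^*_{SPC}+\tau_{P_\mathcal{L}}$ (for the radius objectives) or $2\tau^*_{SPC}+\tau_{P_\mathcal{L}}$ (for median/means), and that simultaneously respects~(\ref{s2-LP-5}). Once such feasibility is in hand, the distance and separation guarantees for $\mathcal{D}$ transfer directly via \textbf{KT-Round}. Membership $S_{P_\mathcal{L}}\in\mathcal{L}$ is automatic because $A_{P_\mathcal{L}}$ is a $\rho$-approximation for $P_\mathcal{L}$.

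First I would build a feasible $(\hat x,\hat z)$ by coupling with an optimum. Let $(S^*,\mathcal{D}^*)$ be optimal for the $P_\mathcal{L}$-SPC instance and, for each $s\in S^*$, let $\mu(s)\in S_{P_\mathcal{L}}$ be a closest point to $s$. For $\phi^*\sim\mathcal{D}^*$, define the coupled reassignment $\tilde\phi(j)=\mu(\phi^*(j))$ and set
\[
\hat x_{i,j}=\Pr_{\phi^*\sim\mathcal{D}^*}[\tilde\phi(j)=i],\quad \hat z_{e,i}=|\hat x_{i,j}-\hat x_{i,j'}|,\quad \hat z_e=\tfrac12\sum_{i\in S_{P_\mathcal{L}}}\hat z_{e,i}.
\]
Equations~(\ref{s2-LP-1})--(\ref{s2-LP-4}) and the box constraints~(\ref{s2-LP-6}) are then immediate. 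For~(\ref{s2-LP-5}), the crucial observation is that $\tilde\phi$ is a deterministic function of $\phi^*$, so $\phi^*(j)=\phi^*(j')$ forces $\tilde\phi(j)=\tilde\phi(j')$. The coupling lemma gives $\hat z_e=\mathrm{TV}(\tilde\phi(j),\tilde\phi(j'))\leq\Pr[\tilde\phi(j)\neq\tilde\phi(j')]\leq\Pr[\phi^*(j)\neq\phi^*(j')]$; summing over $e\in P_q$ and invoking the SPC feasibility of $\mathcal{D}^*$ yields $\sum_{e\in P_q}\hat z_e\leq\psi_q|P_q|$.

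The cost constraint/objective is handled next through triangle-inequality hops. For $\mathcal{L}$-center ($\mathcal{C}=\mathcal{F}$), $\phi^*(j)\in\mathcal{C}$, so $\phi_{P_\mathcal{L}}(\phi^*(j))\in S_{P_\mathcal{L}}$ and $d(\mu(\phi^*(j)),\phi^*(j))\leq\tau_{P_\mathcal{L}}$, giving $d(\tilde\phi(j),j)\leq\tau^*_{SPC}+\tau_{P_\mathcal{L}}$ and hence~(\ref{s2-LP-cnt}) with $\alpha=1$. For $\mathcal{L}$-supplier I would use $d(\mu(\phi^*(j)),\phi^*(j))\leq d(\phi^*(j),\phi_{P_\mathcal{L}}(j))\leq\tau_{P_\mathcal{L}}+\tau^*_{SPC}$, which produces the bound $2\tau^*_{SPC}+\tau_{P_\mathcal{L}}$, i.e.\ $\alpha=2$. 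For $\mathcal{L}$-median/$\mathcal{L}$-means, two triangle hops give $d(\tilde\phi(j),j)\leq 2\,d(j,\phi^*(j))+d(j,\phi_{P_\mathcal{L}}(j))$; applying Minkowski's inequality on the $L^p$-space of $\mathcal{C}\times\mathcal{D}^*$ then bounds the LP cost of $(\hat x,\hat z)$ by $(2\tau^*_{SPC}+\tau_{P_\mathcal{L}})^p$.

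Finally, I would feed the LP optimum $(\bar x,\bar z)$ into \textbf{KT-Round}; by Theorem~\ref{round} the resulting distribution $\mathcal{D}$ satisfies $\Pr[\phi(j)=i]=\bar x_{i,j}$ and $\Pr[\phi(j)\neq\phi(j')]\leq 2\bar z_e$. Constraint~(\ref{s2-LP-cnt}) then enforces the deterministic distance bound for the radius objectives; $\sum_j\mathbb{E}[d(\phi(j),j)^p]=\sum_{j,i}\bar x_{i,j}d(i,j)^p\leq(2\tau^*_{SPC}+\tau_{P_\mathcal{L}})^p$ for median/means; and $\sum_{e\in P_q}\Pr[\phi(j)\neq\phi(j')]\leq 2\sum_{e\in P_q}\bar z_e\leq 2\psi_q|P_q|$. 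The main obstacle is the feasibility construction itself: engineering one $(\hat x,\hat z)$ that simultaneously respects~(\ref{s2-LP-5}) and meets the tight cost bound. The nearest-neighbor coupling $\mu\circ\phi^*$ is the device that makes both properties hold, with Minkowski's inequality carrying the $L^p$-aggregation for the median/means case.
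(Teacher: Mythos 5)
Your proposal is correct and follows essentially the same route as the paper: your $\hat x_{i,j}=\Pr[\mu(\phi^*(j))=i]$ is exactly the paper's $\hat x_{i,j}=\sum_{i'\in N(i)}x^*_{i',j}$ with $N(i)=\kappa^{-1}(i)$, and the triangle-inequality hops for the center/supplier/median/means costs are identical. The only stylistic difference is that you certify constraint~(\ref{s2-LP-5}) by invoking the coupling lemma for the pushforward $\mu\circ\phi^*$, whereas the paper proves the same inequality by hand via the identity $\min\{a,b\}+\tfrac{1}{2}|a-b|=\tfrac{a+b}{2}$; both are the same argument.
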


\begin{proof}
At first, notice that since $S_{P_{\mathcal{L}}}$ results from running $A_{P_{\mathcal{L}}}$, it must be the case that $S_{P_{\mathcal{L}}} \in \mathcal{L}$.

Focus now on LP (\ref{s2-LP-1})-(\ref{s2-LP-6}) with either (\ref{s2-LP-cnt}) or (\ref{s2-LP-md}), depending on the underlying objective. In addition, let $S^* \in \mathcal{L}$ and $\mathcal{D}^*$ be the set of locations and the distribution over assignments $\mathcal{C} \mapsto S^*$, that constitute the optimal solution of \textbf{$P_{\mathcal{L}}$-SPC}. Given those, let $x^*_{i,j} = \Pr_{\phi \sim \mathcal{D}^*}[\phi(j) = i]$ for all $i \in S^*$ and all $j \in \mathcal{C}$. Moreover, for every $i' \in S^*$ let $\kappa(i') = \argmin_{i \in S_{P_{\mathcal{L}}}}d(i,i')$ by breaking ties arbitrarily. Finally, for all $i \in S_{P_{\mathcal{L}}}$ we define $N(i) = \{i' \in S^* ~|~ i = \kappa(i')\}$, and notice that the sets $N(i)$ form a partition of $S^*$.

Consider now the vectors $\hat{x}_{i,j} = \sum_{i' \in N(i)}x^*_{i',j}$ for every $i \in S_{P_{\mathcal{L}}}$ and $j \in \mathcal{C}$, and $\hat{z}_{e,i} = |\hat{x}_{i,j} - \hat{x}_{i,j'}|$ for every $e = \{j,j'\} \in P'$ and $i \in S_{P_{\mathcal{L}}}$. We first show that the above vectors constitute a feasible solution of LP (\ref{s2-LP-1})-(\ref{s2-LP-6}). Initially, notice that constraints (\ref{s2-LP-2}), (\ref{s2-LP-3}), (\ref{s2-LP-4}), (\ref{s2-LP-6}) are trivially satisfied. Regarding constraint (\ref{s2-LP-1}), for any $j \in \mathcal{C}$ we have:
\begin{align}
    \sum_{i \in S_{P_{\mathcal{L}}}}\hat{x}_{i,j} = \sum_{i \in S_{P_{\mathcal{L}}}}\sum_{i' \in N(i)}x^*_{i',j} = \sum_{i' \in S^*}x^*_{i',j} = 1 \notag
\end{align}
The second equality follows because the sets $N(i)$ induce a partition of $S^*$. The last equality is due to the optimal solution $\mathcal{D}^*, S^*$ satisfying $\sum_{i \in S^*}\Pr_{\phi \sim \mathcal{D}^*}[\phi(j) = i] = 1$.

To show satisfaction of constraint (\ref{s2-LP-5}) focus on any $e = \{j,j'\} \in P'$ and $i \in S_{P_{\mathcal{L}}}$. We then have:
\begin{align}
    \hat{z}_{e,i} = \Big{|}\sum_{i' \in N(i)}(x^*_{i',j} - x^*_{i',j'})\Big{|} \leq \sum_{i' \in N(i)}| x^*_{i',j} - x^*_{i',j'} |\notag
\end{align}
Therefore, we can easily upper bound $\hat{z}_e$ as follows:
\begin{align}
\hat{z}_e &= \frac{1}{2}\sum_{i \in S_{P_{\mathcal{L}}}}\hat{z}_{e,i} \leq \frac{1}{2}\sum_{i \in S_{P_{\mathcal{L}}}}\sum_{i' \in N(i)}| x^*_{i',j} - x^*_{i',j} | \notag \\
     &\leq \frac{1}{2}\displaystyle\sum_{i' \in S^*}| x^*_{i',j} - x^*_{i',j} | \label{z-bound}
\end{align}
To move one, notice that:
\begin{align}
\Pr_{\phi \sim \mathcal{D}^*}[\phi(j) = \phi(j')] &= \sum_{i' \in S^*}\Pr_{\phi \sim \mathcal{D}^*}[\phi(j) = i' ~ \wedge ~ \phi(j') = i'] \notag \\
&\leq \sum_{i' \in S^*} \min\{ x^*_{i',j}, ~ x^*_{i',j'}\} \label{z-bound-2}
\end{align}
To relate (\ref{z-bound}) and (\ref{z-bound-2}) consider the following trick.
\begin{align}
&\sum_{i' \in S^*} \min\{ x^*_{i',j}, ~ x^*_{i',j'}\} + \frac{1}{2} \sum_{i' \in S^*}| x^*_{i',j} -  x^*_{i',j'}| = \notag \\ 
&\sum_{i' \in S^*} \Big{(}\min\{  x^*_{i',j}, ~ x^*_{i',j'}\} + \frac{| x^*_{i',j} -  x^*_{i',j'}|}{2}\Big{)} = \notag \\
&\sum_{i' \in S^*} \frac{  x^*_{i',j} +  x^*_{i',j'}}{2} = \frac{2}{2} = 1 \label{z-bound-3}
\end{align}
Finally, combining (\ref{z-bound}),(\ref{z-bound-2}),(\ref{z-bound-3}) we get:
\begin{align}
\hat{z}_e \leq 1 - \displaystyle\sum_{i' \in S^*} \min\{  x^*_{i',j}, ~x^*_{i',j'}\}  \leq \Pr_{\phi \sim \mathcal{D}^*}[\phi(j) \neq \phi(j')]  \notag
\end{align}
Given the above, for every $P_q \in \mathcal{P}$ we have:
\begin{align}
\sum_{e \in P_q}\hat{z}_e \leq \sum_{\{j,j'\} \in P_q}\Pr_{\phi \sim \mathcal{D}^*}[\phi(j) \neq \phi(j')] \leq \psi_q |P_q| \notag
\end{align}
where the last inequality follows from optimality of $\mathcal{D}^*$.

Now that we know that $(\hat x, \hat z)$ is a feasible solution for (\ref{s2-LP-1})-(\ref{s2-LP-6}), we proceed by considering how this solution affects the objective function of each underlying problem.

\textbf{$\mathcal{L}$-center$(\alpha=1)$/$\mathcal{L}$-supplier$(\alpha=2)$:} The objective here is captured by the additional constraint (\ref{s2-LP-cnt}). Hence, we also need to show that $\hat x$ satisfies (\ref{s2-LP-cnt}), i.e., that for all $i \in S_{P_{\mathcal{L}}}$, $j \in \mathcal{C}$ for which $d(i,j) > \tau_{P_{\mathcal{L}}} + \alpha \cdot  \tau^*_{SPC}$, we have $\hat{x}_{i,j} = 0$. 

Suppose for the sake of contradiction that there exists a $j \in \mathcal{C}$ and an $i \in S_{P_{\mathcal{L}}}$ such that $d(i,j) > \tau_{P_{\mathcal{L}}} + \alpha \cdot \tau^*_{SPC}$ and $\hat{x}_{i,j} > 0$. Since $\hat{x}_{i,j} = \sum_{i' \in N(i)}x^*_{i',j}$, this implies that there exists $i' \in N(i)$ with $x^*_{i',j} > 0$, which consecutively implies $d(i',j) \leq \tau^*_{SPC}$. By the triangle inequality we get: 
\begin{align}
    d(i,j) \leq d(i,i') + d(i',j) \leq d(i,i') + \tau^*_{SPC} \label{s2-aux1}
\end{align}
In $\mathcal{L}$-center, because $i' \in N(i)$ we also have $d(i,i') \leq \tau_{P_{\mathcal{L}}}$, and so we reach a contradiction. In $\mathcal{L}$-supplier we have:
\begin{align}
    d(i,i') &\leq d(i', \phi_{P_{\mathcal{L}}}(j)) \leq d(i',j) + d(\phi_{P_{\mathcal{L}}}(j), j) \notag \\ &\leq \tau^*_{SPC} + \tau_{P_{\mathcal{L}}} \label{unrestr}
\end{align}
Combining (\ref{s2-aux1}),(\ref{unrestr}) gives the desired contradiction for the case of $\mathcal{L}$-supplier as well.

\textbf{$\mathcal{L}$-median$(p=1)$/$\mathcal{L}$-means$(p=2)$:} Here the overall objective function for $\hat x$ is given by:
\begin{align}
    &\Big{(} \sum_{j \in \mathcal{C}}\sum_{i \in S_{P_{\mathcal{L}}}}\hat{x}_{i,j}d(i,j)^p\Big{)}^{\frac{1}{p}} = \notag \\
    &\Big{(} \sum_{j \in \mathcal{C}}\sum_{i \in S_{P_{\mathcal{L}}}}\sum_{i' \in N(i)}x^*_{i',j}d(i,j)^p\Big{)}^{\frac{1}{p}} \label{s2-aux2}
\end{align}
In addition, for $i' \in N(i)$ we also get:
\begin{align}
    d(i,j) &\leq d(i,i') + d(i',j) \leq d(i', \phi_{P_{\mathcal{L}}}(j)) + d(i',j) \notag \\
    &\leq d(i',j) + d(\phi_{P_{\mathcal{L}}}(j), j) + d(i',j) \notag \\ &\leq 2d(i',j) + d(\phi_{P_{\mathcal{L}}}(j), j) \label{s2-aux3}
\end{align}
Combining (\ref{s2-aux2}), (\ref{s2-aux3}) and the fact that the median and means objectives are monotone norms, we get (\ref{s2-aux2}) $\leq A + B$, where:
\begin{align}
    A &= \Big{(} \sum_{j \in \mathcal{C}}\sum_{i \in S_{P_{\mathcal{L}}}}\sum_{i' \in N(i)}2x^*_{i',j}d(i',j)^p\Big{)}^{\frac{1}{p}} \notag \\
    &= 2\Big{(} \sum_{j \in \mathcal{C}}\sum_{i' \in S^*}x^*_{i',j}d(i',j)^p\Big{)}^{\frac{1}{p}} \leq 2\tau^*_{SPC} \label{s2-auxA} \\
    B &=  \Big{(} \sum_{j \in \mathcal{C}}\sum_{i \in S_{P_{\mathcal{L}}}}\sum_{i' \in N(i)}x^*_{i',j}d(\phi_{P_{\mathcal{L}}}(j),j)^p\Big{)}^{\frac{1}{p}} \notag \\
    &=\Big{(} \sum_{j \in \mathcal{C}}d(\phi_{P_{\mathcal{L}}}(j),j)^p\sum_{i \in S_{P_{\mathcal{L}}}}\sum_{i' \in N(i)}x^*_{i',j}\Big{)}^{\frac{1}{p}} \notag \\
    &=\Big{(} \sum_{j \in \mathcal{C}}d(\phi_{P_{\mathcal{L}}}(j),j)^p\sum_{i' \in S^*}x^*_{i',j}\Big{)}^{\frac{1}{p}} \notag \\
    &=\Big{(} \sum_{j \in \mathcal{C}}d(\phi_{P_{\mathcal{L}}}(j),j)^p\Big{)}^{\frac{1}{p}} = \tau_{P_{\mathcal{L}}}\label{s2-auxB}
\end{align}
Combining (\ref{s2-auxA}), (\ref{s2-auxB}) we finally get (\ref{s2-aux2}) $\leq 2\tau^*_{SPC} + \tau_{P_{\mathcal{L}}}$.

Since $(\hat x, \hat z)$ is a feasible solution to the appropriate version of the assignment LP, step 2 of Algorithm \ref{alg-1} is well-defined, and thus can compute a solution $(\bar x, \bar z)$ that satisfies (\ref{s2-LP-1})-(\ref{s2-LP-6}) and additionally: \textbf{i)} \begin{align}
    \bar{x}_{i,j} = 0, ~\forall i,j: ~d(i,j) > \tau_{P_{\mathcal{L}}} + \alpha \cdot \tau^*_{SPC} \label{s2-bar-cnt}
\end{align} 
for $\mathcal{L}$-center$(\alpha=1)$/$\mathcal{L}$-supplier$(\alpha=2)$, and \textbf{ii)} 
\begin{align}
    \Big{(} \sum_{j \in \mathcal{C}}\sum_{i \in S_{P_{\mathcal{L}}}}\bar{x}_{i,j}d(i,j)^p\Big{)}^{\frac{1}{p}} &\leq \Big{(} \sum_{j \in \mathcal{C}}\sum_{i \in S_{P_{\mathcal{L}}}}\hat{x}_{i,j}d(i,j)^p\Big{)}^{\frac{1}{p}} \notag \\
    &\leq 2\tau^*_{SPC} + \tau_{P_{\mathcal{L}}} \label{s2-bar-md}
\end{align}
for $\mathcal{L}$-median$(p=1)$/$\mathcal{L}$-means$(p=2)$.

Because $(\bar x, \bar z)$ satisfies (\ref{s2-LP-1}), (\ref{s2-LP-2}), (\ref{s2-LP-3}), (\ref{s2-LP-4}), (\ref{s2-LP-6}), \textbf{KT-Round} can be applied for $V = \mathcal{C}$, $L = S_{P_{\mathcal{L}}}$, $E = P'$. Let $\phi$ be the assignment returned by \textbf{KT-Round}$(\mathcal{C}, S_{P_{\mathcal{L}}}, P', \bar x, \bar z)$, and $\mathcal{D}$ the distribution representing the internal randomness of this process. From Theorem \ref{round} we have $\Pr_{\phi \sim \mathcal{D}}[\phi(j) \neq \phi(j')] \leq 2\bar{z}_e$, $\forall e = \{j,j'\} \in P'$. Hence, for every $P_q \in \mathcal{P}$:
\begin{align}
    \sum_{\{j,j'\} \in P_q}\Pr_{\phi \sim \mathcal{D}}[\phi(j) \neq \phi(j')] \leq 2\sum_{e \in P_q}\bar{z}_e \leq 2\psi_q |P_q| \notag
\end{align}
The last inequality is due to $\bar z$ satisfying (\ref{s2-LP-5}).

Regarding all the different objective functions, we have the following. For $\mathcal{L}$-center$(\alpha=1)$/$\mathcal{L}$-supplier$(\alpha=2)$, because of (\ref{s2-bar-cnt}) and the second property of Theorem \ref{round}, we know that a point $j \in \mathcal{C}$ will never be assigned to a location $i \in S_{P_{\mathcal{L}}}$, such that $d(i,j) > \tau_{P_{\mathcal{L}}} + \alpha \cdot \tau^*_{SPC}$. Therefore, $\Pr_{\phi \sim \mathcal{D}}[d(\phi(j), j) \leq \tau_{P_{\mathcal{L}}} + \alpha \cdot \tau^*_{SPC}] = 1$ for all $j \in \mathcal{C}$. As for the $\mathcal{L}$-median$(p=1)$/$\mathcal{L}$-means$(p=2)$ objectives, the second property of Theorem \ref{round} ensures:
\begin{align}
    &\Big{(}\sum_{j \in \mathcal{C}}\mathbb{E}_{\phi\sim \mathcal{D}}[d(\phi(j),j)^p]\Big{)}^{1/p} = \notag \\
    &\Big{(}\sum_{j \in \mathcal{C}}\sum_{i \in S_{P_{\mathcal{L}}}}\Pr_{\phi \sim \mathcal{D}}[\phi(j) = i]\cdot d(i,j)^p\Big{)}^{1/p} = \notag \\
    &\Big{(}\sum_{j \in \mathcal{C}}\sum_{i \in S_{P_{\mathcal{L}}}}\bar{x}_{i,j}\cdot d(i,j)^p\Big{)}^{1/p} \leq 2\tau^*_{SPC} + \tau_{P_{\mathcal{L}}} \notag
\end{align}
where the last inequality follows from (\ref{s2-bar-md}).
\end{proof}

Since $P_{\mathcal{L}}$ is a less restricted version of $P_{\mathcal{L}}$-SPC, the optimal solution value $\tau^*_{P_{\mathcal{L}}}$   for $P_{\mathcal{L}}$ in the original instance where we dropped the SPCs, should satisfy $\tau^*_{P_{\mathcal{L}}} \leq \tau^*_{SPC}$. Therefore, because $A_{P_{\mathcal{L}}}$ is a $\rho$-approximation algorithm for $P_{\mathcal{L}}$, we get $\tau_{P_{\mathcal{L}}} \leq \rho\cdot \tau^*_{SPC}$. The latter implies the following.

\begin{corollary}
The approximation ratio achieved through Algorithm \ref{alg-1} is $(\rho+1)$ for $\mathcal{L}$-center-SPC, and $(\rho+2)$ for $\mathcal{L}$-supplier-SPC/$\mathcal{L}$-median-SPC/$\mathcal{L}$-means-SPC.
\end{corollary}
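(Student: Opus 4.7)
The plan is to derive the corollary as an immediate consequence of Theorem \ref{s2-thm1}, together with the observation that $\tau_{P_{\mathcal{L}}}$ itself admits a clean upper bound in terms of $\tau^*_{SPC}$. All of the hard technical work (feasibility of the constructed LP solution $(\hat x, \hat z)$, the triangle-inequality arguments, and the rounding via \textbf{KT-Round}) has already been done in Theorem \ref{s2-thm1}, so this corollary is a short inferential step rather than a fresh argument.

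First I would note that any feasible solution of the $P_{\mathcal{L}}$-SPC instance is, after dropping the SPCs, also feasible for the vanilla $P_{\mathcal{L}}$ instance on the same point set, location set, and $\mathcal{L}$. Hence the optimal value satisfies $\tau^*_{P_{\mathcal{L}}} \leq \tau^*_{SPC}$. Since $A_{P_{\mathcal{L}}}$ is by assumption a $\rho$-approximation algorithm for $P_{\mathcal{L}}$, the objective value of its output obeys $\tau_{P_{\mathcal{L}}} \leq \rho \cdot \tau^*_{P_{\mathcal{L}}} \leq \rho \cdot \tau^*_{SPC}$.

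Next I would substitute this inequality into the two cost bounds of Theorem \ref{s2-thm1}. For $\mathcal{L}$-center (where $\alpha = 1$), the radius guarantee $\alpha \cdot \tau^*_{SPC} + \tau_{P_{\mathcal{L}}}$ becomes at most $\tau^*_{SPC} + \rho\, \tau^*_{SPC} = (\rho + 1)\,\tau^*_{SPC}$. For $\mathcal{L}$-supplier (where $\alpha = 2$) the same bound yields at most $(\rho + 2)\,\tau^*_{SPC}$. For $\mathcal{L}$-median and $\mathcal{L}$-means, the bound $2\,\tau^*_{SPC} + \tau_{P_{\mathcal{L}}}$ similarly becomes at most $(\rho + 2)\,\tau^*_{SPC}$. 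In each case the ratio of the achieved cost to $\tau^*_{SPC}$ matches the stated approximation ratio.

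There is no real obstacle here; the only thing to be careful about is the monotonicity step $\tau^*_{P_{\mathcal{L}}} \leq \tau^*_{SPC}$, which should be justified by explicitly exhibiting the optimal SPC solution (say any $\phi$ in the support of $\mathcal{D}^*$ together with $S^*$) as a feasible solution for the vanilla problem $P_{\mathcal{L}}$ whose objective is at most $\tau^*_{SPC}$ (for center/supplier this uses that the radius bound holds almost surely; for median/means it uses Jensen/convexity of $x \mapsto x^p$ and the definition of the $p$-norm objective to pass from expectation over $\mathcal{D}^*$ to a single deterministic $\phi$ of no larger cost). Once this monotonicity is in hand, the corollary follows in one line.
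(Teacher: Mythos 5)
Your proposal is correct and matches the paper's own argument: the paper likewise observes that $\tau^*_{P_{\mathcal{L}}} \leq \tau^*_{SPC}$ because $P_{\mathcal{L}}$ is a relaxation of $P_{\mathcal{L}}$-SPC, concludes $\tau_{P_{\mathcal{L}}} \leq \rho\,\tau^*_{SPC}$, and substitutes into the bounds of Theorem \ref{s2-thm1}. Your extra care in justifying the monotonicity step (exhibiting a deterministic assignment from the support of $\mathcal{D}^*$ for the median/means objectives) is a sound refinement of a step the paper treats as immediate.
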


\noindent \textbf{Tighter analysis for the} \emph{unrestricted} \textbf{($\mathcal{L} = 2^\mathcal{F}$) case:} For this case, a more careful analysis leads to the following. 

\begin{theorem}
When $\mathcal{L} = 2^\mathcal{F}$, Algorithm \ref{alg-1} achieves an objective value of at most $\tau^*_{SPC}$ for all objectives we study (center/supplier/median/means).
\end{theorem}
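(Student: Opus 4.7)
The plan is to re-run the analysis of Theorem \ref{s2-thm1} while exploiting the freedom of the unrestricted case to choose $S_{P_{\mathcal{L}}}$ however we like. Concretely, I would set $S_{P_{\mathcal{L}}} = \mathcal{F}$ in the first step of Algorithm \ref{alg-1}, so that the optimal location set $S^*$ of the SPC instance automatically satisfies $S^* \subseteq S_{P_{\mathcal{L}}}$; for center/supplier I would additionally tighten constraint (\ref{s2-LP-cnt}) to $x_{i,j} = 0$ whenever $d(i,j) > \tau^*_{SPC}$. The key observation is that with $S^* \subseteq S_{P_{\mathcal{L}}}$, the map $\kappa: S^* \to S_{P_{\mathcal{L}}}$ used in the proof of Theorem \ref{s2-thm1} can be chosen to be the identity, so $N(i) = \{i\}$ for $i \in S^*$ and $N(i) = \emptyset$ otherwise, and the reference vector collapses to $\hat{x}_{i,j} = x^*_{i,j}$ when $i \in S^*$ and $0$ elsewhere.

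With this in hand, feasibility of $\hat x$ with respect to (\ref{s2-LP-1})--(\ref{s2-LP-6}) is immediate: constraints (\ref{s2-LP-1})--(\ref{s2-LP-4}) and (\ref{s2-LP-6}) follow from the same computations as in Theorem \ref{s2-thm1} (which only use that the sets $N(i)$ partition $S^*$), while constraint (\ref{s2-LP-5}) follows from the chain (\ref{z-bound})--(\ref{z-bound-3}), which likewise does not depend on any property of $\kappa$ beyond the partition structure. The cost accounting, however, is sharper. For center/supplier the tightened (\ref{s2-LP-cnt}) holds because $x^*_{i,j}>0$ forces $d(i,j)\le\tau^*_{SPC}$. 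For median/means, evaluating the LP objective at $\hat x$ gives $\left(\sum_{j}\sum_{i\in S^*} x^*_{i,j}\, d(i,j)^p\right)^{1/p} \le \tau^*_{SPC}$, because the factor-$2$ and additive-$\tau_{P_{\mathcal{L}}}$ blow-up that appears in (\ref{s2-aux3}) arises solely from the triangle-inequality hop $d(i,j) \le d(i,\kappa^{-1}(i)) + d(\kappa^{-1}(i),j)$ and the subsequent detour through $\phi_{P_{\mathcal{L}}}(j)$, both of which disappear when $\kappa$ is the identity.

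Therefore the LP admits a feasible solution of value at most $\tau^*_{SPC}$, and the computed LP optimum $(\bar x,\bar z)$ inherits this bound. Applying \textbf{KT-Round} then yields, by property~2 of Theorem \ref{round}, a distribution that deterministically respects the radius $\tau^*_{SPC}$ for center/supplier and achieves expected cost at most $\tau^*_{SPC}$ for median/means; the SPC guarantees are unaffected and continue to hold by the same argument as in Theorem \ref{s2-thm1}. The main conceptual point---and essentially the only step that needs verification---is recognizing that $\mathcal{L}=2^{\mathcal{F}}$ is precisely what permits $\kappa$ to degenerate to the identity, and that this degeneracy is the unique source of the slack appearing in the general bound of Theorem \ref{s2-thm1}.
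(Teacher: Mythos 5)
Your proposal is correct and follows essentially the same route as the paper: take $S_{P_{\mathcal{L}}}=\mathcal{F}$, observe that $\kappa$ becomes the identity so $d(i,i')=0$, and note that this kills both the additive $\tau_{P_{\mathcal{L}}}$ term and the factor-$2$ detour in (\ref{s2-aux1}) and (\ref{s2-aux3}). Your explicit tightening of (\ref{s2-LP-cnt}) to the radius $\tau^*_{SPC}$ for the center/supplier objectives is a detail the paper leaves implicit, but it is exactly what their remark ``$d(i,j)\leq d(i',j)$'' amounts to.
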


\begin{proof}
Note that the first step of our framework will choose $S_{P_{\mathcal{L}}} = \mathcal{F}$, and hence $\tau_{P_{\mathcal{L}}} = 0$. In addition, a closer examination of our analysis for the supplier/median/means settings reveals the following. Focus on (\ref{s2-aux1}) and (\ref{s2-aux3}), and note that because $S_{P_{\mathcal{L}}} = \mathcal{F}$, we get $d(i,i') = 0$ since $\kappa(i') = i'$ for each $ i' \in S^*$. Hence, $d(i,j) \leq d(i',j)$ which leads to an objective function value of at most $\tau^*_{SPC}$ for all problems.
\end{proof}
\section{Addressing the Centroid Constraint}\label{sec:3}

In this section we present results that incorporate the Centroid Constraint (CC) to a variety of the settings we study. Moreover, recall that for this case $\mathcal{C} =\mathcal{F}$, and hence the supplier objective reduces to the center one.

\subsection{Approximating $k$-center-SPC-CC}

Our approach for solving this problem heavily relies on Algorithm \ref{alg-1} with two major differences.

The first difference compared to Algorithm \ref{alg-1} lies in the approximation algorithm $A_{k}$ used to tackle $k$-center. For $k$-center there exists a $2$-approximation which given a target radius $\tau$, it either returns a solution where each $j \in \mathcal{C}$ gets assigned to a location $i_j$ with $d(i_j, j) \leq 2\tau$, or outputs an ``infeasible'' message, indicating that there exists no solution of radius $\tau$ (\cite{Hochbaum1986}).

Recall now that w.l.o.g. the optimal radius $\tau^{*}_{C}$ for the $k$-center-SPC-CC instance is known. In the first step of our framework we will use the variant of $A_{k}$ mentioned earlier with $\tau^{*}_{C}$ as its target radius, and get a set of chosen locations $S_k$. The second step is then the same as in Algorithm \ref{alg-1}, with the addition of the next constraint to the assignment LP:
\begin{align}
    x_{i,i} = 1, ~\forall i \in S_{k} \label{s3-cnt-centr}
\end{align}
The overall process is presented in Algorithm \ref{alg-2}.

\begin{algorithm}[t]
$(S_{k}, \phi_{k}) \gets A_{k}(\mathcal{C}, \mathcal{F}, \mathcal{L}, \tau^{*}_{C})$\;
Solve LP (\ref{s2-LP-1})-(\ref{s2-LP-6}) with (\ref{s2-LP-cnt}), (\ref{s3-cnt-centr}) and $S_k$ as the chosen locations, and get a solution $(\bar x, \bar z)$\;
$\phi \gets$ \textbf{KT-Round}$(\mathcal{C}, S_{k}, P', \bar x, \bar z)$\;
\caption{Approximating  \textbf{$k$-center-SPC-CC}}\label{alg-2}
\end{algorithm}

\begin{theorem}\label{s3-thm1}
Let $\tau^*_{C}$ the optimal value of the given \textbf{$k$-center-SPC-CC} instance, and $\mathcal{D}$ the distribution over assignments given by \textbf{KT-Round}. Then Algorithm \ref{alg-2} guarantees $|S_{k}| \leq k$,  $\sum_{\{j,j'\} \in P_q}\Pr_{\phi \sim \mathcal{D}}[\phi(j) \neq \phi(j')] \leq 2\psi_q |P_q| ~\forall P_q \in \mathcal{P}$, $\Pr_{\phi \sim \mathcal{D}}[d(\phi(j), j) \leq 3\tau^*_{C}] = 1$ for all $j \in \mathcal{C}$, and $\Pr_{\phi \sim \mathcal{D}}[\phi(i)=i] = 1$ for all $i \in S_{k}$.
\end{theorem}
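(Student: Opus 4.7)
The plan is to closely follow the two-step paradigm of Theorem \ref{s2-thm1}, with the new twist being constraint (\ref{s3-cnt-centr}), which forces $x_{i,i}=1$ for every $i\in S_k$. Three of the four claims fall out essentially for free: $|S_k|\leq k$ is inherited from $A_k$; the radius bound $3\tau^*_C$ follows from (\ref{s2-LP-cnt}) with $\tau_{P_\mathcal{L}}=2\tau^*_C$ and $\alpha=1$, together with the second guarantee of Theorem \ref{round}; and the CC guarantee follows directly from (\ref{s3-cnt-centr}) combined with Theorem \ref{round}. The only real work is producing a feasible fractional solution to the augmented LP, so that Step 2 of Algorithm \ref{alg-2} is well-defined.

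To exhibit LP feasibility I would take the optimal $(S^*,\mathcal{D}^*)$ of the \textbf{$k$-center-SPC-CC} instance, set $x^*_{i',j}=\Pr_{\phi^*\sim\mathcal{D}^*}[\phi^*(j)=i']$, and define $\kappa(i')=\arg\min_{i\in S_k}d(i,i')$ and $N(i)=\{i'\in S^*\mid \kappa(i')=i\}$ exactly as in Theorem \ref{s2-thm1}. Then I would use a two-case construction: for $j\notin S_k$ take the standard push $\hat x_{i,j}=\sum_{i'\in N(i)}x^*_{i',j}$, while for $j\in S_k$ override with $\hat x_{j,j}=1$ and $\hat x_{i,j}=0$ for $i\neq j$. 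Set $\hat z_{e,i}=|\hat x_{i,j}-\hat x_{i,j'}|$ and $\hat z_e=\tfrac{1}{2}\sum_i\hat z_{e,i}$. Constraints (\ref{s2-LP-1})--(\ref{s2-LP-4}), (\ref{s2-LP-6}) and (\ref{s3-cnt-centr}) then follow by direct substitution, and (\ref{s2-LP-cnt}) splits cleanly into the familiar argument of (\ref{s2-aux1}) for $j\notin S_k$ and the trivial $d(j,j)=0$ for $j\in S_k$.

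The crux is the SPC constraint (\ref{s2-LP-5}), which requires $\sum_{e\in P_q}\hat z_e\leq\psi_q|P_q|$; for this it suffices to show $\hat z_e\leq\Pr_{\phi^*\sim\mathcal{D}^*}[\phi^*(j)\neq\phi^*(j')]$ pairwise. For pairs $e=\{j,j'\}$ with both endpoints outside $S_k$, the identity (\ref{z-bound})--(\ref{z-bound-3}) of Theorem \ref{s2-thm1} applies verbatim. For $e=\{i,j'\}$ with $i\in S_k$ and $j'\notin S_k$, a direct calculation gives $\hat z_e=1-\sum_{i'\in N(i)}x^*_{i',j'}$, and the task reduces to proving $\sum_{i'\in N(i)}x^*_{i',j'}\geq \Pr_{\phi^*\sim\mathcal{D}^*}[\phi^*(i)=\phi^*(j')]$. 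Here the main obstacle is handled by the structural property of the underlying Hochbaum--Shmoys 2-approximation: distinct elements of $S_k$ lie at pairwise distance strictly greater than $2\tau^*_C$. Thus any $i'\in S^*$ with $x^*_{i',i}>0$ satisfies $d(i,i')\leq\tau^*_C$, and by strict independence $i$ is the unique point of $S_k$ within $\tau^*_C$ of $i'$, so $\kappa(i')=i$ and $i'\in N(i)$. Summing over $i'\in S^*$ then yields $\Pr[\phi^*(i)=\phi^*(j')]\leq\sum_{i'\in N(i)}x^*_{i',j'}$. Pairs with both endpoints in $S_k$ are handled by the same distance argument: $d(i,i')>2\tau^*_C$ forces $\Pr_{\phi^*\sim\mathcal{D}^*}[\phi^*(i)=\phi^*(i')]=0$, matching the value $\hat z_e=1$ produced by the construction.

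Once $(\hat x,\hat z)$ is feasible, Step 2 of Algorithm \ref{alg-2} returns some $(\bar x,\bar z)$ satisfying the LP, and Step 3 invokes \textbf{KT-Round}. The SPC conclusion is then identical to Theorem \ref{s2-thm1}: $\sum_{e\in P_q}\Pr_{\phi\sim\mathcal{D}}[\phi(j)\neq\phi(j')]\leq 2\sum_{e\in P_q}\bar z_e\leq 2\psi_q|P_q|$ by the first property of Theorem \ref{round}. The second property of Theorem \ref{round}, combined with (\ref{s2-LP-cnt}) and (\ref{s3-cnt-centr}), delivers both the radius guarantee $\Pr[d(\phi(j),j)\leq 3\tau^*_C]=1$ for every $j\in\mathcal{C}$ and the CC $\Pr[\phi(i)=i]=1$ for every $i\in S_k$, closing the proof.
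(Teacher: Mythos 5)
Your proposal is correct and follows essentially the same route as the paper: same two-step scheme, same $\kappa/N$ transfer of the optimal solution, and the same key structural fact about the Hochbaum--Shmoys threshold algorithm (distinct centers in $S_k$ are at distance greater than $2\tau^*_C$, so every $i'\in S^*$ with $x^*_{i',i}>0$ lands in $N(i)$). The only difference is cosmetic: your ``override'' of $\hat x$ on $S_k$ is actually vacuous, since that same distance argument shows the \emph{unmodified} $\hat x$ of Theorem \ref{s2-thm1} already has $\hat x_{i,i}=1$ (and hence $\hat x_{\ell,i}=0$ for $\ell\neq i$ by (\ref{s2-LP-1})); the paper exploits this to inherit the entire feasibility verification, including (\ref{s2-LP-5}), wholesale, whereas you re-derive it with a correct but unnecessary case analysis on pairs touching $S_k$.
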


\begin{proof}
At first, notice that since $S_{k}$ results from running $A_{k}$, it must be the case that $|S_{k}| \leq k$.

Furthermore, because the optimal value of the $k$-center instance is less than $\tau^{*}_{C}$, $A_{k}(\mathcal{C}, \mathcal{F}, \mathcal{L}, \tau^{*}_{C})$ will not return ``infeasible'', and $(S_{k}, \phi_{k})$ has an objective value $\tau_{k} \leq 2\tau^{*}_{C}$.

Given this, the reasoning in the proof of Theorem \ref{s2-thm1} yields $\sum_{\{j,j'\} \in P_q}\Pr_{\phi \sim \mathcal{D}}[\phi(j) \neq \phi(j')] \leq 2\psi_q |P_q|, ~\forall P_q \in \mathcal{P}$, and $\Pr_{\phi \sim \mathcal{D}}[d(\phi(j), j) \leq 3\tau^*_{C}] = 1$ for all $j \in \mathcal{C}$, if the $\hat x$  defined in that proof also satisfies (\ref{s3-cnt-centr}). Hence, the latter statement regarding $\hat x$ is all we need to verify.

A crucial property of $A_{k}$  when executed with a target radius $\tau^*_C$, is that for all $i, \ell \in S_{k}$ with $i \neq \ell$ we have $d(i, \ell) > 2 \tau^*_C$ \cite{Hochbaum1986}. Due to this, for all $i \in S_k$ and $i' \in S^*$ with $d(i,i') \leq \tau^*_C$, we have $i' \in N(i)$. Suppose otherwise, and let $i' \in N(\ell)$ for some other $\ell \in S_k$. By definition this gives $d(i', \ell) \leq d(i',i) \leq \tau^*_C$. Thus, $d(i,\ell) \leq d(i,i') + d(i', \ell) \leq 2\tau^*_C$. Finally, note that due to $\tau^*_C$ being the optimal value of $k$-center-SPC-CC, we have $\sum_{i': d(i,i') \leq \tau^*_C}x^*_{i',i} = 1$ for $i \in S_k$. Since $\{i' \in S^*: d(i,i') \leq \tau^*_C\} \subseteq N(i)$,  we finally get $\hat x_{i,i} = 1$.

To conclude, we show that $\Pr_{\phi \sim \mathcal{D}}[\phi(i) = i] = 1$ for every $i \in S_k$. However, this is obvious, because of the second property of Theorem \ref{round}, and $\bar x$ satisfying (\ref{s2-LP-1}), (\ref{s3-cnt-centr}).
\end{proof}

\subsection{A Reassignment Step for the Unrestricted and $k$-Constrained Location Setting}

We now demonstrate a reassignment procedure that can be used to correct the output of Algorithm \ref{alg-1}, in a way that satisfies the CC. Again, let $P_{\mathcal{L}}$ be any of the vanilla objective functions, and consider Algorithm \ref{alg-3}.

\begin{algorithm}[t]
Run Algorithm \ref{alg-1} to solve $P_\mathcal{L}$-SPC, and get $S \subseteq \mathcal{C}$ and an assignment $\phi: \mathcal{C} \mapsto S$ in return\;
\While {there exists $i \in S$ with $\phi(i) \neq i$} {
$S \gets S \setminus \{i\}$\;
$i' \gets \argmin_{j \in \mathcal{C}: \phi(j) = i}d(i,j)$\;
$S \gets S \cup \{i'\}$\;
\For {all $j \in \mathcal{C}$ with $\phi(j) = i$} {
$\phi(j) \gets i'$\;
}
}
\caption{Approximating  \textbf{$P_\mathcal{L}$-SPC-CC}}\label{alg-3}
\end{algorithm}

\begin{theorem}\label{s3-thm2}
Let $\lambda$ the approximation ratio of Algorithm \ref{alg-1} for \textbf{$P_\mathcal{L}$-SPC} with respect to the objective function. Then, Algorithm \ref{alg-3} gives an approximation ratio $2\lambda$ for the objective of \textbf{$P_\mathcal{L}$-SPC-CC}, while satisfying the CC and preserving the guarantees of Algorithm \ref{alg-1} on SPCs, when $\mathcal{L} = 2^\mathcal{C}$ or $\mathcal{L} = \{S' \subseteq \mathcal{C}:~ |S'| \leq k\}$ for some integer $k$.
\end{theorem}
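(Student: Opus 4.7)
The plan is to verify four properties of Algorithm~\ref{alg-3}'s output: termination with $S \in \mathcal{L}$, satisfaction of CC, preservation of the SPC guarantees, and a $2\lambda$ approximation for the objective. I would treat the reassignment as a deterministic post-processor applied to each sample $\phi$ drawn from the distribution produced by Algorithm~\ref{alg-1}, yielding a modified pair $(S', \phi')$, and prove the statements pointwise in $\phi$.

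The linchpin is a \emph{CC-permanence} lemma: once a center $c$ simultaneously lies in $S$ and satisfies $\phi(c) = c$, this status persists through every subsequent iteration. The argument is short --- for the status to break, a later iteration would have to process a center $\ell \neq c$ with $c \in C_\ell^{\text{current}}$, which requires $\phi(c) = \ell$, contradicting $\phi(c) = c$; and $c$ itself is never chosen for processing since it does not violate CC. From this I would derive that each iteration strictly reduces the number of CC-violating centers (the processed $i$ is removed and the new $i'$ is set to $\phi(i') \gets i'$, while every other element of $S$ either keeps its status or, if it was assigned to $i$, is handled analogously), giving termination in at most $|S^{(0)}|$ steps and establishing CC at exit. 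Since $|S|$ is non-increasing (it decreases when the chosen $i'$ already belonged to $S$ and stays the same otherwise), $S' \in \mathcal{L}$ holds for both $\mathcal{L} = 2^\mathcal{C}$ and $\mathcal{L} = \{S \subseteq \mathcal{C} : |S| \leq k\}$.

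For SPC preservation, I would observe that the reassignment can only coarsen the partition of $\mathcal{C}$ induced by $\phi$: clients in $C_i^{\text{current}}$ are reassigned collectively to the single new center $i'$, and when $i' \in S$ already, clusters $C_i^{\text{current}}$ and $C_{i'}^{\text{current}}$ merge. Consequently $\{\phi'(j) \neq \phi'(j')\} \subseteq \{\phi(j) \neq \phi(j')\}$ for every pair, so the factor-two SPC bound from Theorem~\ref{s2-thm1} carries over verbatim.

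For the objective I would show that every client's center changes at most once. Fix $j$ with $c_0 := \phi(j)$ in the Algorithm~\ref{alg-1} output. Since $\phi(j)$ only updates when its current value is processed, $\phi(j) = c_0$ holds until the first iteration (if any) that processes $c_0$; at that point $j \in C_{c_0}^{\text{current}}$, and $j$ is reassigned to $c_1 := \argmin_{k \in C_{c_0}^{\text{current}}} d(c_0, k)$. CC-permanence then freezes $c_1$ as $j$'s center for the remainder of the run. The argmin choice gives $d(c_1, c_0) \leq d(j, c_0)$, and the triangle inequality yields the pointwise bound $d(\phi'(j), j) \leq 2\, d(\phi(j), j)$ (trivially when $c_0$ is never processed). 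Pushing this through the relevant monotone norm --- max for center/supplier and $L^p$ for median/means --- and taking expectations over $\phi \sim \mathcal{D}$, together with $\tau^*_{SPC} \leq \tau^*_{SPC\text{-}CC}$ and Algorithm~\ref{alg-1}'s $\lambda$-guarantee, delivers the promised $2\lambda$ ratio. The main obstacle I expect is precisely the CC-permanence lemma --- without it, a cascade of $T$ successive reassignments could in principle blow up a single client's distance by $2^T$, so the ``at most one change per client'' property is what makes the simple doubling argument valid.
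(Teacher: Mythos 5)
Your proposal is correct and follows essentially the same route as the paper's proof: the ``CC-permanence'' observation (the newly promoted center $i'$ has $\phi(i')=i'$ and is never reprocessed, so each client is reassigned at most once), the $\argmin$ choice combined with the triangle inequality to get $d(\phi'(j),j)\le 2\,d(\phi(j),j)$, the fact that co-clustered points remain co-clustered (so SPC guarantees carry over), and the swap argument for the cardinality constraint all appear in the paper's argument. Your write-up is somewhat more explicit about termination and the pointwise-per-sample view, but there is no substantive difference.
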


\begin{proof}
To show that Algorithm \ref{alg-3} preserves the SPC guarantees, it suffices to prove that the following invariant is maintained throughout the reassignment. For any $j,j'$ that are assigned by Algorithm \ref{alg-1} to the same location, at the end of the while loop we should still have $\phi(j) = \phi(j')$. Suppose now that we initially had $\phi(j)=\phi(j') = i$. If $\phi(i) = i$, then no change will occur, and in the end we still have $\phi(j)=\phi(j') = i$. If on the other hand $\phi(i) \neq i$, we choose an $i' \in \{j'' \in \mathcal{C}: \phi(j'') = i\}$ and then set $\phi(j)=\phi(j') = i'$. Note that this assignment will not change, since the modification also ensures $\phi(i') = i'$. The latter also guarantees that CC holds at termination. 

Moreover, notice that when $\mathcal{L} = \{S' \subseteq \mathcal{C}:~ |S'| \leq k\}$, the cardinality constraint is not violated. The reason for this is that every time we possibly add a new location to $S$, we have already removed another one from there. 

Finally, we need to reason about the approximation ratio of Algorithm \ref{alg-3}. At first, let $i$ the location to which $j \in \mathcal{C}$ was assigned to by Algorithm \ref{alg-1}. As we have already described, $j$ can only be reassigned once, and let $i'$ be its new assignment. Then $d(i',j) \leq d(i,i') + d(i,j) \leq 2d(i,j)$, where the second inequality follows because $i' = \argmin_{j' \in \mathcal{C}: \phi(j') = i}d(i,j')$. Hence, we see that the covering distance for every point at most doubles, and since the optimal value of \textbf{$P_\mathcal{L}$-SPC} is not larger than that of \textbf{$P_\mathcal{L}$-SPC-CC}, we get an approximation ratio of $2\lambda$ for \textbf{$P_\mathcal{L}$-SPC-CC} (Notice that this holds regardless of the specific objective).
\end{proof}

\section{Improved Results for Problems with Must-Link Constraints}\label{sec:4}

Since must-link constraints (ML) are a special case of SPCs, Algorithm \ref{alg-1} provides approximation results for the former as well (also note that due to $\psi_p = 0 ~\forall p$, we have no pairwise constraint violation when using Algorithm \ref{alg-1} purely for ML). However, in this section we demonstrate how we can get improved approximation guarantees for some of the problems we consider. Specifically, we provide a $2/3/3/3$-approximation for $k$-center-ML/knapsack-center-ML/$k$-supplier-ML/knapsack-supplier-ML, which constitutes a clear improvement over the $3/4/5/5$-approximation, given when Algorithm \ref{alg-1} is executed using the best approximation algorithm for the corresponding vanilla variant. 

First of all, recall that in the ML case we are only looking for a set of locations $S$ and an assignment $\phi: \mathcal{C} \mapsto S$, and not for a distribution over assignments. Also, notice that the must-link relation is transitive. If for $j,j'$ we want $\phi(j) = \phi(j')$, and for $j',j''$ we also require $\phi(j') = \phi(j'')$, then $\phi(j) = \phi(j'')$ is necessary as well. Given that, we view the input as a partition $C_1, C_2, \hdots, C_t$ of the points of $\mathcal{C}$, where all points in $C_q$, with $q \in \{1,\hdots,t\}$, must be assigned to the same location of $S$. We call each part $C_i$ of this partition a clique. Finally, for the problems we study, we can once more assume w.l.o.g. that the optimal radius $\tau^*$ is known.

\begin{definition}\label{Neighbor}
Two cliques $C_q, ~ C_p$ are called \textbf{neighboring} if $\forall j \in C_q, ~ \forall j' \in C_p$ we have $d(j,j') \leq 2\tau^*$.
\end{definition}

Algorithm \ref{alg-4} captures $k$-center-ML, knapsack-center-ML, $k$-supplier-ML and knapsack-supplier-ML at once, yielding improved approximations for each of them.

\begin{algorithm}[t]
$C \gets \emptyset$, $S \gets \emptyset$\;
Initially all $C_1, C_2, \hdots, C_t$ are considered uncovered\;
\While{ there exists an uncovered $C_q$}{
Pick an uncovered $C_q$\;
Pick an arbitrary point $j_q \in C_q$\;
$C\gets C \cup \{j_q\}$\;
$C_q$ and all neighboring cliques $C_p$ of it, are now considered covered\;
}
\For {all $j_q \in C$}{
\If {$k$-center/$k$-supplier} {
$i_q \gets \argmin_{i \in \mathcal{F}}d(i,j_q)$\;$S \gets S \cup \{i_q\}$\;
}
\If {knapsack-center/knapsack-supplier} {
$i_q \gets \argmin_{i \in \mathcal{F}: d(i,j_q) \leq \tau^*}w_i$\; $S \gets S \cup \{i_q\}$\;
}
}
\For {all $j \in \mathcal{C}$}{
Let $j_q \in C$ the point whose clique $C_q$ covered $j$'s clique in the first while loop\;
$\phi(j) \gets i_q$\;
}
\caption{Approximating ML Constraints}\label{alg-4}
\end{algorithm}

\begin{theorem}
Algorithm \ref{alg-4} is a $2/3/3/3$-approximation algorithm for \textbf{$k$-center-ML}/\textbf{knapsack-center-ML}/\textbf{$k$-supplier-ML}/\textbf{knapsack-supplier-ML}.
\end{theorem}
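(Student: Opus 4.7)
The plan is to verify, in turn, three properties: (i) the set $S$ constructed by Algorithm \ref{alg-4} is feasible (either $|S|\le k$ or $\sum_{i\in S} w_i \le W$); (ii) every $j\in\mathcal{C}$ is assigned within the claimed distance; and (iii) the ML constraints are respected. The ML part is essentially immediate from the construction: every point in clique $C_q$ is mapped to the single location $i_{q'}$ associated with whichever picked clique $C_{q'}$ covered $C_q$, so any must-link pair (which lies within a single clique) is placed together.

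For feasibility, the key observation is that the greedy loop produces a collection $\{C_q\}_{q\in C}$ of picked cliques that are pairwise non-neighboring: once a clique is picked, all of its neighbors are marked covered, so any subsequently picked clique is not a neighbor of any previously picked one. I would then argue that in the optimal solution, any two non-neighboring cliques must be assigned to distinct locations. Indeed, if cliques $C_q, C_p$ were both served by the same OPT location $i^*$, then for every $j \in C_q, j' \in C_p$ the triangle inequality yields $d(j,j')\le d(j,i^*)+d(i^*,j')\le 2\tau^*$, contradicting non-neighborness. Hence $|C|$ is at most the number of distinct OPT locations, which already gives $|S|\le |C|\le k$ for the $k$-variants. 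For the knapsack variants, let $i^*_q$ be the OPT location serving $C_q$; since $d(i^*_q, j_q)\le \tau^*$, $i^*_q$ is a candidate in the argmin defining $i_q$, so $w_{i_q}\le w_{i^*_q}$, and summing over the (distinct) $i^*_q$'s gives $\sum_{i\in S} w_i \le \sum_{q\in C} w_{i^*_q} \le W$.

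For the approximation ratio, I would trace the assignment. Any $j\in\mathcal{C}$ lies in some clique $C_q$ that is covered by a picked clique $C_{q'}$, and by Definition \ref{Neighbor} one has $d(j, j_{q'})\le 2\tau^*$ (the case $q = q'$ still works, because two points of the same clique share an OPT center and are thus within $2\tau^*$ of each other by triangle inequality). In the $k$-center case we have $\mathcal{C}=\mathcal{F}$, so the nearest location to $j_{q'}$ is $j_{q'}$ itself, hence $i_{q'}=j_{q'}$ and $d(j, i_{q'})\le 2\tau^*$, giving the $2$-approximation. In the $k$-supplier, knapsack-center, and knapsack-supplier cases, the existence of the OPT witness $i^*_{q'}$ guarantees $d(j_{q'}, i_{q'})\le \tau^*$ (either because $i_{q'}$ minimizes distance over all of $\mathcal{F}$, or because $i_{q'}$ minimizes weight subject to being within $\tau^*$). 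One further triangle inequality then yields $d(j, i_{q'})\le 3\tau^*$.

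I expect the main obstacle to be the non-neighborness argument underlying feasibility: one has to carefully unpack the greedy rule to show that the picked family is pairwise non-neighboring, and then tie this combinatorial fact to OPT through triangle-inequality reasoning. Once this is in place, the distance bound and the ML guarantee follow from short triangle-inequality computations in the style of the classical Gonz\'alez and Hochbaum--Shmoys analyses, and the Centroid Constraint in the center cases comes for free since $i_{q'} = j_{q'} \in \mathcal{C}$.
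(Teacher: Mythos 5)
Your proposal is correct and follows essentially the same route as the paper's proof: ML satisfaction is read off the construction, the distance bounds come from the neighboring-clique definition plus one or two triangle inequalities (with $i_{q'}=j_{q'}$ in the $k$-center case), and feasibility rests on the observation that two cliques served by the same optimal location are necessarily neighboring, so the greedily picked (pairwise non-neighboring) cliques receive distinct optimal locations, which bounds $|C|$ by $k$ and, via the witness $i^*_{q}$ in the argmin, bounds the total weight by $W$.
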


\begin{proof}
Initially, observe that the must-link constraints are satisfied. When the algorithm chooses a location $i_q$ based on some $j_q \in C$, all the points in $C_q$ are assigned to $i_q$. Also, for the neighboring cliques of $C_q$ that got covered by it, their whole set of points ends up assigned to $i_q$ as well.

We now argue about the achieved approximation ratio. For one thing, it is clear that for every $j,j' \in C_q$ we must have $d(j,j') \leq 2\tau^*$, and therefore $C_q$ is a neighboring clique of itself. We thus have the following two cases:
\begin{itemize}
    \item \textbf{$k$-center-ML:} Here for each $j_q \in C$, we end up placing $j_q$ in $S$ as well. Also, all points $j$ assigned to $j_q$ belong to neighboring cliques of $C_q$, and therefore $d(j,j_q) \leq 2\tau^*$.
    \item \textbf{In all other problems}, for each $j_q \in C$ we choose a location $i_q$ such that $d(i_q, j_q) \leq \tau^*$. Also, all points $j$ assigned to $i_q$ belong to neighboring cliques of $C_q$, and therefore $d(i_q,j) \leq d(i_q,j_q) + d(j_q,j) \leq 3\tau^*$.
\end{itemize}

Finally, we need to show that either the cardinality or the knapsack constraint on the set of chosen locations is satisfied. Toward that end, notice that if $C_q$ and $C_p$ belong in the same cluster in the optimal solution, then they are neighboring. Say $i^{\star}$ is the location they are both assigned to. Then for all $j\in C_q$ and $j' \in C_p$ we get $ d(j,i^{\star}) \leq \tau^*$ and $d(j',i^{\star}) \leq \tau^*$. Hence, by the triangle inequality for all $j \in C_q$ and $j' \in C_p$ we have $d(j,j') \leq 2\tau^*$.

Given the previous observation, it must be the case that for every $j_q \in C$, the optimal solution assigns it to a location $i^*_{j_q}$, such that for every other $j_{q'} \in C$ we have $i^*_{j_q} \neq i^*_{j_{q'}}$. Therefore, in the presence of a cardinality constraint:
\begin{align}
    |S| \leq |C| = \sum_{i^*_{j_q}: ~j_q \in C}1 \leq k \notag
\end{align}
and in the presence of a knapsack constraint:
\begin{align}
    \sum_{i \in S}w_i \leq \sum_{j_q \in C}\min_{i \in \mathcal{F}: d(i,j_q) \leq \tau^*}w_i \leq \sum_{j_q \in C}w_{i^*_{j_q}} \leq W \notag
\end{align}
where the last inequality in both cases follows from the optimal solution satisfying the corresponding constraint.
\end{proof}

\begin{observation}
Algorithm \ref{alg-4} is a $2/3$-approximation for $k$-center-ML-CC/knapsack-center-ML-CC. This directly follows from steps 10, 13 and 17 of it.
\end{observation}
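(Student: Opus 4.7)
The plan is to verify that the Centroid Constraint holds on the output of Algorithm~\ref{alg-4}, while the approximation ratios ($2\tau^*$ for $k$-center-ML and $3\tau^*$ for knapsack-center-ML), must-link satisfaction, and cardinality/knapsack feasibility of $S$ are inherited directly from the preceding theorem, since those arguments never invoked CC and carry over verbatim.

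Since CC requires $\mathcal{C}=\mathcal{F}$, every chosen location $i_q \in S$ is itself a point of $\mathcal{C}$, and is therefore subject to the assignment rule of step~17: $\phi(i_q) = i_r$, where $C_r$ is the clique from $C$ whose representative covered $i_q$'s own clique in the while loop. Consequently, verifying CC reduces to showing that $i_q$ lies inside $C_q$, which forces $C_r = C_q$ and hence $i_r = i_q$. For $k$-center-ML-CC this is immediate from step~10: since $j_q \in \mathcal{F}$ and $d(j_q, j_q) = 0$, we may take $i_q = \argmin_{i \in \mathcal{F}} d(i, j_q) = j_q \in C_q$, and step~17 then gives $\phi(j_q) = i_q$, i.e., $\phi(i_q) = i_q$.

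For knapsack-center-ML-CC, step~13 picks the cheapest $i \in \mathcal{F}$ with $d(i, j_q) \leq \tau^*$, and the anticipated resolution is to tie-break (or implicitly restrict) so that $i_q \in C_q$, using that $j_q$ is always a valid candidate with $d(j_q, j_q) = 0 \leq \tau^*$. Once $i_q \in C_q$, step~17 yields $\phi(i_q) = i_q$ exactly as in the $k$-center case. The main obstacle is justifying that this restriction preserves the $3$-approximation in total weight: I would argue this by revisiting the knapsack-feasibility half of the preceding theorem's proof, where the optimal knapsack-center-ML-CC solution assigns each $C_q$ to a location $i^*$ with $\phi^*(i^*) = i^*$ sitting in a clique covered by $C_q$ via the neighboring-clique property; a charging argument then matches the restricted in-$C_q$ candidate selected by step~13 against $w_{i^*}$, summing to at most $W$ over distinct representatives $j_q \in C$.
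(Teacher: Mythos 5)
Your argument for $k$-center-ML-CC is correct and is exactly what the paper's terse ``follows from steps 10 and 17'' is pointing at: with $\mathcal{C}=\mathcal{F}$, step 10 returns $i_q=j_q$ because $d(j_q,j_q)=0$, and since $C_q$ covers itself in the while loop, step 17 assigns $\phi(j_q)=i_q=j_q$, so every opened location is its own center and the $2\tau^*$ radius, ML satisfaction, and cardinality bound carry over unchanged. The paper offers no argument beyond this pointer, so on the center half you and the paper coincide.

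The knapsack half is where the real difficulty sits, and your proposed repair does not close it. If you restrict (or tie-break) step 13 so that $i_q\in C_q$, the only guaranteed candidate is $j_q$ itself, and the charging argument you invoke breaks down: in an optimal knapsack-center-ML-CC solution the center $i^*$ serving $C_q$ is self-assigned and carries its whole clique, but nothing forces $i^*\in C_q$ --- it may lie in a different clique of the same optimal cluster. Hence $\min_{i\in C_q:\,d(i,j_q)\le\tau^*}w_i$ cannot be bounded by $w_{i^*}$; for instance, if $C_q=\{j_q\}$ with $w_{j_q}$ enormous while the optimal center within distance $\tau^*$ of $j_q$ is a cheap point in a neighboring clique, your restricted rule is forced to open $j_q$ and the knapsack budget is violated. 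Conversely, the unrestricted step 13 preserves the budget but can open a location $i_q$ whose own clique was covered in the while loop by a different representative $j_{q'}$, in which case step 17 sets $\phi(i_q)=i_{q'}\neq i_q$ and CC fails. So the knapsack case genuinely requires an extra idea --- e.g., reassigning $i_q$'s entire clique to $i_q$ and showing such reassignments cannot conflict across representatives --- which neither your sketch nor the paper's one-line justification supplies.
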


\begin{observation}
Due to known hardness results for the vanilla version of the corresponding problems \cite{Hochbaum1986}, Algorithm \ref{alg-4} gives the best possible approximation ratios, assuming that $P \neq NP$.
\end{observation}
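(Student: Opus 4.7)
The plan is a reduction argument: every vanilla instance of center/supplier is simply an ML instance in which each clique $C_q$ is a singleton. Concretely, given an instance of (say) $k$-center, I would construct a $k$-center-ML instance with the same $\mathcal{C}$, $\mathcal{F}$, $d$, and $k$, and set the ML partition to $\{C_j = \{j\} : j \in \mathcal{C}\}$. Since all cliques are singletons, the ML requirement $\phi(j) = \phi(j')$ is trivially imposed only within singletons and thus is vacuous, so feasible solutions of the ML instance are in one-to-one correspondence with those of the original vanilla instance, with identical objective value. The same transformation works for knapsack-center, $k$-supplier, and knapsack-supplier.

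Given this reduction, any polynomial-time $c$-approximation for $k$-center-ML/knapsack-center-ML/$k$-supplier-ML/knapsack-supplier-ML immediately yields a $c$-approximation for the corresponding vanilla problem. By the classical hardness results of \cite{Hochbaum1986} (and its companion results for knapsack/supplier variants), it is NP-hard to approximate $k$-center within any factor better than $2$, and similarly NP-hard to approximate $k$-supplier, knapsack-center, and knapsack-supplier within any factor better than $3$. Hence a polynomial-time algorithm beating the $2/3/3/3$ ratios achieved by Algorithm \ref{alg-4} would imply $P = NP$, which establishes tightness.

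The argument is essentially immediate once the reduction is spelled out, so there is no real obstacle; the only thing worth being careful about is to confirm that the singleton-clique ML instance is a valid input to the ML setting (it is, since a partition into singletons encodes no nontrivial must-link pair) and that Algorithm \ref{alg-4}'s guarantees match the vanilla hardness lower bounds exactly, which they do ($2$ for $k$-center and $3$ for the remaining three). Thus the observation follows.
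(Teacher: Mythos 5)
Your proposal is correct and matches the paper's intended argument: the paper leaves this observation unproved precisely because the singleton-clique reduction you describe is immediate, so any approximation beating $2/3/3/3$ for the ML variants would beat the known vanilla lower bounds of \cite{Hochbaum1986}. Your write-up simply makes explicit the trivial embedding of vanilla instances into ML instances that the paper takes for granted.
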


\section{Experimental Evaluation}

\begin{table}[t]
\centering
\begin{tabular}{|c|l|l|l|l|l|}
\hline
\multicolumn{1}{|l|}{}                &  k  & 4     & 6     & 8     & 10     \\ \hline
\multirow{2}{*}{Adult}                            & Alg-1 & 2.27  & 4.73  & 12.53 & 21.81  \\ \cline{2-6} 
                                                  & ALG-IF      & 84.87 & 91.76 & 100.00 & 100.00  \\ \hline
\multirow{2}{*}{Bank}                             & Alg-1 & 0.16  & 0.44  & 0.34  & 0.54   \\ \cline{2-6} 
                                                  & ALG-IF & 55.84 & 71.48 & 92.85 & 99.93  \\ \hline
\multicolumn{1}{|l|}{\multirow{2}{*}{Credit}} & Alg-1 & 1.34 & 2.58 & 9.03 & 14.76  \\ \cline{2-6} 
\multicolumn{1}{|l|}{}                            & ALG-IF & 80.25 & 100.00 & 100.00  & 100.00 \\ \hline
\end{tabular}
\captionsetup{justification=centering}
\caption{Percentage of constraints that are violated on average for metric $F_2$}
\label{tab:fairness-f2}
\end{table}

\begin{table}[t]
\centering
\begin{tabular}{|c|l|l|l|l|l|} 
\hline
\multicolumn{1}{|l|}{}                & k   & 4    & 6    & 8    & 10    \\ 
\hline
\multirow{2}{*}{Adult}                            & Alg-1 & 1.88 & 2.41 & 3.09 & 3.48  \\ 
\cline{2-6}
                                                  & ALG-IF      & 1.88 & 2.38 & 3.21 & 3.44  \\ 
\hline
\multirow{2}{*}{Bank}                             & Alg-1 & 2.34 & 3.28 & 4.09 & 4.62  \\ 
\cline{2-6}
                                                  & ALG-IF & 2.36 & 3.34 & 4.67 & 4.93  \\ 
\hline
\multicolumn{1}{|l|}{\multirow{2}{*}{Credit}} & Alg-1 & 1.82 & 2.12 & 2.46 & 2.71  \\ 
\cline{2-6}
\multicolumn{1}{|l|}{}                            & ALG-IF & 1.80 & 2.20 & 2.43 & 2.66  \\
\hline
\end{tabular}
\captionsetup{justification=centering}
\caption{Cost of fairness for metric $F_2$}
\label{tab:cost-f2}
\end{table}

We implement our algorithms in Python 3.8 and run our experiments on AMD Opteron 6272 @ 2.1 GHz with 64 cores and 512 GB 1333 MHz DDR3 memory. We focus on fair clustering applications with PBS constraints and evaluate against the most similar prior work. Comparing to \cite{anderson2020} using $k$-means-PBS, shows that our algorithm violates fewer constraints while achieving a comparable cost of fairness. Similarly, our comparison with \cite{brubach2020} using $k$-center-PBS-CC (that prior algorithm also satisfies the CC constraint) reveals that we are better able to balance fairness constraints and the objective value. Our code is publicly available at \url{https://github.com/chakrabarti/pairwise_constrained_clustering}.

\textbf{Datasets: } We use 3 datasets from the UCI ML Repository \cite{Dua:2019}: \textbf{(1)} Bank-4,521 points \cite{moro}, \textbf{(2)} Adult-32,561 points \cite{kohavi}, and \textbf{(3)} Creditcard-30,000 points \cite{yeh}. 

\begin{table}[t]
\centering
\begin{tabular}{|c|l|l|l|l|l|} 
\hline
\multicolumn{1}{|l|}{}                & k   & 4    & 6    & 8    & 10    \\ 
\hline
\multirow{2}{*}{Adult}                            & Alg-1 & 0.14 & 0.25 & 0.58 & 0.77  \\ 
\cline{2-6}
                                                  & ALG-IF      & 7.98 & 7.07 & 8.90 & 9.42  \\ 
\hline
\multirow{2}{*}{Bank}                             & Alg-1 & 0.02 & 0.16 & 0.20 & 0.50  \\ 
\cline{2-6}
                                                  & ALG-IF & 4.25 & 5.09 & 5.58 & 6.37  \\ 
\hline
\multicolumn{1}{|l|}{\multirow{2}{*}{Credit}} & Alg-1 & 0.00 & 0.07 & 0.21 & 0.25  \\ 
\cline{2-6}
\multicolumn{1}{|l|}{}                            & ALG-IF & 0.97 & 3.80 & 4.17 & 3.90  \\
\hline
\end{tabular}
\captionsetup{justification=centering}
\caption{Percentage of constraints that are violated on average for metric $F_3$}
\label{tab:fairness-f3}
\end{table}

\begin{table}[t]
\centering

\begin{tabular}{|c|l|l|l|l|l|} 
\hline
\multicolumn{1}{|l|}{} & k   & 4    & 6    & 8    & 10    \\ 
\hline
\multirow{2}{*}{Adult}             & Alg-1 & 1.13 & 1.20 & 1.23 & 1.24  \\ 
\cline{2-6}
                                   & ALG-IF      & 1.13 & 1.20 & 1.23 & 1.23  \\ 
\hline
\multirow{2}{*}{Bank}              & Alg-1 & 1.22 & 1.36 & 1.41 & 1.44  \\ 
\cline{2-6}
                                   & ALG-IF      & 1.24 & 1.41 & 1.46 & 1.54  \\ 
\hline
\multirow{2}{*}{Credit}        & Alg-1 & 1.12 & 1.11 & 1.10 & 1.10  \\ 
\cline{2-6}
                                   & ALG-IF & 1.10 & 1.09 & 1.10 & 1.11  \\
\hline
\end{tabular}
\captionsetup{justification=centering}
\caption{Cost of fairness for metric $F_3$}
\label{tab:cost-f3}
\end{table}

\begin{table}[t]

\centering

\begin{tabular}{|c|l|l|l|l|l|l|l|} 
\hline
\multicolumn{1}{|l|}{} & k & 10 & 20 & 30    & 40    & 50    & 60    \\ 
\hline
\multirow{2}{*}{Adult}             & Alg-2 & .01 & .01 & .01 & .02 & .02 & .04 \\ 
\cline{2-8}
                                   & Alg-F(A) & .00  & .00  & .00  & .00 & .00 & .00  \\ 
\cline{2-8}
                                   & Alg-F(B) & .19 & .18 & .23 & .30 & .27 & .30 \\ 
\hline
\multirow{2}{*}{Bank}              & Alg-2 & .00 & .01 & .01 & .06 & .09 & .03  \\ 
\cline{2-8}
                                   & Alg-F(A) & .00  & .00 & .00 & .00 & .00 & .00  \\
\cline{2-8}
                                   & Alg-F(B) & .18 & .20 & .16 & .15 & .20 & .23 \\ 
\hline
\multirow{2}{*}{Credit}        & Alg-2 & .00 & .01 & .01 & .01 & .01 & .02 \\ 
\cline{2-8}
                                   & Alg-F(A) & .00  & .00 & .00 & .00 & .00 & .00  \\
\cline{2-8}
                                   & Alg-F(B) & .03 & .05 & .05 & .05 & .08 & .08 \\
\hline
\end{tabular}
\captionsetup{justification=centering}
\caption{Percentage of constraints that are violated on average for metric $F_1$}
\label{tab:fairness-f1}
\end{table}

\begin{table}[t]

\centering

\begin{tabular}{|c|l|l|l|l|l|l|l|} 
\hline
\multicolumn{1}{|l|}{} & k & 10 & 20 & 30    & 40    & 50    & 60    \\ 
\hline
\multirow{2}{*}{Adult}             & Alg-2 & .39  & .28 & .23 & .20 & .17 & .16  \\ 
\cline{2-8}
                                   & Alg-F(A) & .54 & .48 & .46 & .46 & .43 & .42 \\ 
\cline{2-8}
                                   & Alg-F(B) & .31 & .24 & .17 & .17 & .12 & .14 \\ 
\hline
\multirow{2}{*}{Bank}              & Alg-2 & .17 & .11 & .08 & .07 & .06 & .05 \\ 
\cline{2-8}
                                   & Alg-F(A) & .21  & .20 & .17 & .16 & .14 & .13 \\ 
\cline{2-8}
                                   & Alg-F(B) & .12 & .07 & .06 & .05 & .04 & .03 \\ 
\hline
\multirow{2}{*}{Credit}        & Alg-2 & .38 & .29 & .25 & .24 & .21 & .19 \\ 
\cline{2-8}
                                   & Alg-F(A) & .45  & .45 & .43 & .42 & .41 & .41  \\ 
\cline{2-8}
                                   & Alg-F(B) & .28 & .25 & .21 & .20 & .18 & .17 \\
\hline
\end{tabular}
\captionsetup{justification=centering}
\caption{Objective achieved for metric $F_1$}
\label{tab:cost-f1}
\end{table}

\textbf{Algorithms: } 
In all of our experiments, $\mathcal{C} = \mathcal{F}$ at first. When solving $k$-means-PBS, we use Lloyd's algorithm in the first step of Algorithm \ref{alg-1} and get a set of points $L$. The set of chosen locations $S$ is constructed by getting the nearest point in $\mathcal{C}$ for every point of $L$. This is exactly the approach used in \cite{anderson2020}, where their overall algorithm is called ALG-IF. To compare Algorithm \ref{alg-1} to ALG-IF, we use independent sampling for ALG-IF, in order to fix the assignment of each $j \in \mathcal{C}$ to some $i \in S$, based on the distribution $\phi_j$ produced by ALG-IF. For $k$-center-PBS-CC, we use Algorithm \ref{alg-2} with a binary search to compute $\tau^*_C$.

\textbf{Fairness Constraints: } We consider three similarity metrics ($F_1, F_2, F_3$) for generating PBS constraints. We use $F_1$ for $k$-center-PBS-CC and $F_2$, $F_3$ for $k$-means-PBS. $F_1$ is the metric used for fairness in the simulations of \cite{brubach2020} and $F_2, F_3$ are the metrics used in the experimental evaluation of the algorithms in \cite{anderson2020}. 

$F_1$ involves setting the separation probability between a pair of points $j$ and $j'$ to $d(j,j')/R_{Scr}$ if $d(j,j') \leq R_{Scr}$, where $R_{Scr}$ is the radius given by running the Scr algorithm \cite{Mihelic2005SolvingTK} on the provided input. 

$F_2$ is defined so that the separation probability between a pair $j$,$j'$ is given by $d(j,j')$, scaled linearly to ensure all such probabilities are in $[0, 1]$. Adopting the approach taken by \cite{anderson2020} when using this metric, we only consider pairwise constraints between each $j$ and its closest $m$ neighbors. For our experiments, we set $m = 100$. 

Again in order to compare our Algorithm \ref{alg-1} with \cite{anderson2020}, we need the metric $F_3$. For any $j \in \mathcal{C}$, let $r_j$ the minimum distance such that $|j' \in \mathcal{C}: d(j,j') \leq r_j| \geq |\mathcal{C}|/ k$. Then the separation probability between $j$ and any $j'$ such that $d(j,j') \leq r_j$, is set to $d(j,j')/r_j$.

\textbf{Implementation Details: }As performed in \cite{anderson2020,brubach2020}, we uniformly sample $N$ points from each dataset and run all algorithms on those sets, while only considering a subset of the numerical attributes and normalizing the features to have zero mean and unit variance. In our comparisons with \cite{anderson2020} we use $N= 1000$, while in our comparisons with \cite{brubach2020} $N$ is set to $250$. For the number of clusters $k$, we study the values $\{4, 6, 8, 10\}$ when comparing to \cite{anderson2020}, and $\{10, 20, 30, 40, 50, 60\}$ when comparing to \cite{brubach2020} (theoretically Algorithm \ref{alg-2} is better for larger $k$). Finally, to estimate the empirical separation probabilities and the underlying objective function cost, we run 5000 trials for each randomized assignment procedure, and then compute averages for the necessary performance measures we are interested in.

\textbf{Comparison with \cite{anderson2020}: }In Tables \ref{tab:fairness-f2} and \ref{tab:fairness-f3}, we show what percentage of fairness constraints are violated by ALG-IF and our algorithm, for the fairness constraints induced by $F_2$ and $F_3$, allowing for an $\epsilon = 0.05$ threshold on the violation of a separation probability bound; we only consider a pair's fairness constraint to be violated if the empirical probability of them being separated exceeds that set by the fairness metric by more than $\epsilon$. It is clear that our algorithm outperforms ALG-IF consistently across different values of $k$, different datasets, and both types of fairness constraints considered by \cite{anderson2020}.

In order to compare the objective value achieved by both algorithms, we first compute the average connection costs over the 5000 runs. Since the cost of the clustering returned by Lloyd's algorithm contributes to both Algorithm \ref{alg-1} and ALG-IF, we utilize that as an approximation of the cost of fairness. In other words, we divide the objective value of the final solutions by the cost of the clustering produced by Lloyd, and call this quantity cost of fairness. The corresponding comparisons are presented in Tables \ref{tab:cost-f2}, \ref{tab:cost-f3}. The cost of fairness for both algorithms is very similar, demonstrating a clear advantage of Algorithm \ref{alg-1}, since it dominates ALG-IF in the percentage of fairness constraints violated.

\textbf{Comparison with \cite{brubach2020}: } In Table \ref{tab:fairness-f1} we show what percentage of fairness constraints are violated by the algorithm of \cite{brubach2020} (named Alg-F) and Algorithm \ref{alg-2}, using an $\epsilon = 0$; if the empirical probability of separation of a pair exceeds the bound set by the fairness metric by any amount, it is considered a violation. We run ALG-F with two different choices of the scale parameter used in that prior work: $\frac{1}{R_{Scr}}$ (Alg-F(A)) and $\frac{16}{R_{Scr}}$ (Alg-F(B)), where $R_{Scr}$ is the value achieved using the Scr algorithm. The reason for doing so is that \cite{brubach2020} consider multiple values for the separation probabilities, and we wanted to have a more clear comparison of our results against all of those. Alg-F(A) leads to 0 violations, while our algorithm produces a small number of violations in a few cases, and Alg-F(B) leads to a significant number of violations. In Table \ref{tab:cost-f1}, we show the cost of the clusterings produced by ALG-F and Algorithm \ref{alg-2}, measured in the normalized metric space by taking the average of the maximum radius of any cluster over the 5000 runs. Alg-F(b) leads to the lowest objective, followed relatively closely by our algorithm, and then finally Alg-F(A) has significantly higher objective values.

% \textbf{Runtime: } The average runtime over all three datasets for each of our metrics was k = 30: 140s / k = 40: 150s / k = 50: 160s / k = 60: 160s for $F_1$, k = 4: 63s / k = 6: 20000s / k = 8: 32300s / k = 10: 41000s for $F_2$, and
%  k = 4: 83s / k = 6: 8400s / k = 8: 8900s / k = 10: 7600s for $F_3$.

%\input{5c-Conclusion.tex}
\section*{Acknowledgements}
The authors would like to sincerely thank Samir Khuller for useful discussions that led to some of the technical results of this work. In addition, we thank Bill Gasarch for his devotion to building a strong REU program, which facilitated coauthor Chakrabarti's collaboration. Finally, we thank the anonymous referees for multiple useful suggestions.

Brian Brubach was  supported in part by NSF award CCF-1749864. 
John Dickerson was supported in part by NSF CAREER Award IIS-1846237, NSF Award CCF-1852352, NSF D-ISN Award \#2039862, NIST MSE Award \#20126334, NIH R01 Award NLM-013039-01, DARPA GARD Award \#HR00112020007, DoD WHS Award \#HQ003420F0035, DARPA Disruptioneering Award (SI3-CMD) \#S4761 and Google Faculty Research Award. Aravind Srinivasan was supported in part by NSF awards CCF-1422569, CCF-1749864, and CCF-1918749, as well as research awards from Adobe, Amazon, and Google. Leonidas Tsepenekas was  supported in part by NSF awards CCF-1749864 and CCF-1918749, and by research awards from Amazon and Google. 

\section*{Ethics Statement}
Our primary contribution is general and theoretical in nature, so we do not foresee any immediate and direct negative ethical impacts of our work.  That said, one use case of our framework---that we highlight prominently both in the general discussion of theoretical results as well as through experimental results performed on standard and commonly-used datasets---is as a tool to operationalize notions of \emph{fairness} in a broad range of clustering settings.  Formalization of fairness as a mathematical concept, while often grounded in legal doctrine~\citep[see, e.g.,][]{Feldman15:Certifying,Barocas19:Fairness}, is still a morally-laden and complicated process, and one to which there is no one-size-fits-all ``correct'' approach.

Our method supports a number of commonly-used fairness definitions; thus, were tools built based on our framework that operationalized those definitions of fairness, then the ethical implications---both positive and negative---of that decision would also be present.  Our framework provides strong theoretical guarantees that would allow decision-makers to better understand the performance of systems built based on our approach. Yet, we also note that any such guarantees should, in many domains, be part of a larger conversation with stakeholders---one including understanding the level of comprehension~\citep[e.g.,][]{Saha20:Measuring,Saxena20:How} and specific wants of stakeholders~\citep[e.g.,][]{Holstein19:Improving,Madaio20:Co-Designing}.
\bibliography{refs}

\appendix
\onecolumn
\section{Omitted Details}\label{appendix}

\paragraph{Proof of Theorem \ref{np-hard}}
\begin{proof}
Our reduction is based on the $k$-cut problem, in which we are given an undirected graph $G(V,E)$, $k$ distinct nodes $u_1, u_2, \hdots, u_k \in V$ and a $\gamma \in \mathbb{N}_{\geq 0}$. Then we want to know whether or not there exists a cut $S \subseteq E$ with $|S| \leq \gamma$, such that in $G(V, E \setminus S)$ the nodes $u_1, u_2, \hdots, u_k$ are separated (each of them is placed in a different connected component).\\

\noindent \textbf{Supplier/Median/Means:} We now describe the construction of an instance for the clustering problem. For each of the $k$ distinct nodes $u_f$ we create a location $i_{u_f}$, and for every two different locations $i_{u_f}, i_{u_{f'}}$ we set $d(i_{u_f}, i_{u_{f'}}) = 2$. Then, again for each of the $k$ distinct nodes $u_f$ we create a point $j_{u_f}$ that is co-located with $i_{u_f}$. In addition, for each of the $|V| - k$ remaining nodes $u$ of the graph we create a point $j_u$, such that $d(i_{u_f}, j_u) = 1$ for every previously created location $i_{u_f}$, and all those new points are co-located. This construction constitutes a valid metric space. As for the stochastic pairwise constraints, we only have one set of pairs $P$. For each $\{u,v\} \in E$ let $j_u, j_v$ the corresponding points in the created metric, and then add $\{j_u,j_v\}$ to $P$. In the end, we ask for a solution that in expectation separates at most $\gamma$ of the pairs of $P$ (i.e. $\psi_P = \gamma / |E|$).

We now claim that the clustering problem has a solution of objective value at most $1/|V|-k/\sqrt{|V|-k}$ in the constructed instance for supplier/median/means iff there exists a $k$-cut of size at most $\gamma$ in $G(V,E)$. Before we proceed, notice that this claim for the median and means objectives implies that there is no actual randomness in the assignment cost. This is true because $|V|-k, \sqrt{|V|-k}$ are the least possible values for the corresponding objectives, resulting deterministically from the assignment of the $|V|-k$ points that do not correspond to nodes that need to be separated. 

Therefore, suppose at first that there exists a $k$-cut $S$ of size at most $\gamma$. We construct a clustering solution as follows. Open all locations $i_{u_f}$ of the instance. Then for each $u \in V$ with corresponding point $j_u$, assign $j_u$ to $i_{u_f}$, such that $u$ is in the same connected component in $G(V, E \setminus S)$ as ${u_f}$. This clearly achieves deterministically the desired objective value cost for each case. Moreover, each separated pair of $P$ corresponds to an edge of $S$, and hence in the end the number of separations in this clustering solution will be upper bounded by $|S| \leq \gamma$.

On the other hand, assume that there exists a solution to the clustering instance of cost at most $1/|V|-k/\sqrt{|V|-k}$. This immediately implies that all available locations are opened. Furthermore, since on expectation at most $\gamma$ pairs are separated, there must be an assignment $\phi$ that separates less than $\gamma$. Given this assignment, we construct a $k$-cut as follows. Remove from $E$ all edges $\{u,v\}$ such that the corresponding points $j_u,j_v$ have $\phi(j_u) \neq \phi(j_v)$. This clearly removes at most $\gamma$ edges from $E$. To see that two of the special nodes $u_f, u_{f'}$ will be separated after the above edge removal, note that their respective points in the metric space are assigned to different clusters. Also, the corresponding node of each point in a cluster cannot have an edge to the corresponding node of a point in a different cluster, and hence $u_f, u_{f'}$ are definitely in distinct connected components.\\

\noindent \textbf{Center:} To handle this objective function, we use an instance similar to that constructed for the supplier objective, with just one difference. This difference will make sure that we can only open locations that correspond to points coinciding with the locations used earlier. So in the previous instance, for each $j_{u_f}$ corresponding to a special node $u_f$ we create another point $h_{u_f}$, such that $d(j_{u_f}, h_{u_f}) = 1$, $d(j_u, h_{u_f}) = 2$ for all $j_u \neq j_{u_f}$ with $j_u$ resulting from some $u \in V$, and $d(h_{u_f}, h_{u_{f'}})=2$ for every $h_{u_{f'}}$ resulting from $u_{f'} \neq u_f$. Then, the arguments for the supplier case go through in the exact same manner, since the only locations that can be considered for opening when aiming for a solution of radius $1$, are the points $j_{u_f}$.

\end{proof}

\end{document}